\documentclass{article}

\usepackage[nonatbib,preprint]{neurips_2026}

\usepackage{amsmath,amsfonts,bm}

\def\eqref#1{equation~\ref{#1}}

\def\1{\bm{1}}

\DeclareMathAlphabet{\mathsfit}{\encodingdefault}{\sfdefault}{m}{sl}
\SetMathAlphabet{\mathsfit}{bold}{\encodingdefault}{\sfdefault}{bx}{n}

\usepackage[utf8]{inputenc}
\usepackage[
  style=numeric,
  sorting=ynt,
  sortcites=true,
  maxbibnames=99
]{biblatex}
\usepackage[T1]{fontenc}
\usepackage{xcolor}
\usepackage{graphicx}
\usepackage{amsthm}
\usepackage{float}
\usepackage{amsmath}
\usepackage{subcaption}
\usepackage{url}
\usepackage{booktabs}
\usepackage{amsfonts}
\usepackage{nicefrac}
\usepackage{microtype}

\usepackage[colorlinks=true, linkcolor=blue, citecolor=blue, urlcolor=blue]{hyperref}
\usepackage{cleveref}

\title{Refusal geometry reflects refusal training: diverse refusal prefixes can raise stable rank and weaken refusal vector ablation attacks}

\author{%
  Andrey Labunets \\
  UC San Diego\\
  \texttt{alabunets@ucsd.edu}
}

\newsavebox{\infographicAbox}

\newtheorem{theorem}{Theorem}[section]
\newtheorem{lemma}[theorem]{Lemma}
\newtheorem{corollary}[theorem]{Corollary}

\theoremstyle{definition}
\newtheorem{definition}[theorem]{Definition}

\theoremstyle{remark}

\newcommand{\RR}{\mathbb{R}}

\newcommand{\rank}{\operatorname{rank}}
\newcommand{\sr}{\operatorname{sr}}

\begin{document}

\maketitle

\begin{abstract}
    Refusal training protects AI models from jailbreaks by training models to decline unsafe queries, reducing the risk of misuse.
    Recent work finds that refusal behavior in aligned language models can be mediated by a single activation direction or a low-dimensional refusal subspace shared across harmful prompts: ablating those directions suppresses refusals while largely preserves other model capabilities.
    Yet it remains unclear why safety-critical features in a wide range of models emerge in a concentrated, low-dimensional structure.
    In a case study of OLMo-2-0425-1B-Instruct we find that the refusal geometry reflects refusal training: activation updates resulting from refusal-completion first-token losses explain the resulting refusal direction and refusal subspace.
    We study refusal directions through the training dynamics across refusal datasets and reveal that their brittleness is associated with repetitive refusal starts, which in turn is linked to concentration of gradients and refusal features in a low-dimensional subspace.
    Across frozen-model analyses and controlled synthetic fine-tuning, we find evidence of a hardening lever: diverse refusal starts can raise stable ranks of gradients and activation changes, making refusals harder to remove with a vector ablation attack.
\end{abstract}

\section{Introduction}

State-of-the-art AI systems are beginning to exhibit dual-use capabilities that could pose severe risks if misused, such as 
exploit generation \parencite{anthropic2026mythospreview}, or assistance with biological threat creation or bio-weaponization \parencite{openai2025futurebiology,gopal2023willreleasingweights}.
This motivates the now-standard view that, as models become more capable, protections should prevent harmful outputs and catastrophic misuse of AI \parencite{bengio2026internationalaisafetyreport,anderljung2023frontierairegulation}.

AI safety and security study ways to reduce these risks, which is typically done with multi-layered  approaches.
Frontier-safety frameworks emphasize capability evaluations, confidentiality of model weights, deployment-time mitigations, monitoring, access controls, and broader ecosystem-level defenses \parencite{bengio2026internationalaisafetyreport,anderljung2023frontierairegulation,buhl2025emergingpracticesfrontier,metr2025commonelements,frontiermodelforum2025frontiermitigations}.
Model-weight security is important because weight access can allow protections to be modified or removed; at the same time, weights access scenario is realistic for open models, insider compromise, model theft, and sufficiently capable adversaries  \parencite{nevo2024securingaimodelweights,carlini2024stealingproductionlanguagemodel,gopal2023willreleasingweights}.

A central technical layer of protection for deployed models, including open-weights models, is safety alignment: models are trained to remain helpful on benign requests while refusing instructions that request harmful assistance or dangerous capabilities \parencite{ouyang2022traininglanguagemodels,bai2022constitutional,qi2024safetyalignmentjusttokens}.
These protections are also evaluated against jailbreak attacks that aim to bypass safety alignment \parencite{perez2022redteaming,wei2023jailbroken,zou2023universal,chao2023jailbreakingblackbox,mehrotra2023tree}.
This safety-aligned behavior is often induced by training on harmful prompts paired with safe refusal completions.

Recent evidence, however, shows that in local, white-box settings those \emph{refusals} in aligned language models can be mediated by a low-dimensional mechanism: \emph{refusal direction}, or a single vector in representation space, projections on which strongly control refuse/comply behavior \parencite{arditi2024refusallanguagemodelsmediated}.
More broadly, activation-steering methods construct directions for specified concepts or behaviors, using prompt pairs, contrastive datasets, supervised
concept labels, or optimized target completions to control properties such as topic, sentiment, factuality, deception, or safety-relevant behavior  \parencite{dathathri2020plugplaylanguagemodels, subramani2022extractinglatentsteeringvectors, turner2024steeringlanguagemodelsactivation, panickssery2024steeringllama2contrastive, beaglehole2025universalsteeringmonitoringai, dunefsky2025oneshotoptimizedsteeringvectors}.
Refusal ablation, however, is a particularly important special case because it targets the shared refuse/comply response mechanism itself.

Existing research mechanistically studied representations of refusal and their geometry, showing that refusal concepts span multiple orthogonal and independent directions, where auxiliary principal components mediate distinct safety-relevant features \parencite{wollschläger2025geometryrefusallargelanguage, pan2025hiddendimensionsllmalignment}.
In particular, \parencite{pan2025hiddendimensionsllmalignment} shows that safety-aligned behavior is controlled by a low-dimensional space.
We use \emph{refusal subspace} to denote such an affine subspace of refusal-related activations.
A separate line of work studied forms of concentration leading to low-rank or low-effective-dimensionality representations, including shallow concentration on early safety tokens \parencite{qi2024safetyalignmentjusttokens}, low-stable-rank transformer activations \parencite{davis2026spectralgradientupdateshelp}, rank collapse in pure attention \parencite{dong2021attentionnot}, neural collapse in final and intermediate classifier representations \parencite{papyan2020prevalence,rangamani2023feature}, and regularities in refusals \parencite{prakash2025imsorryicant}.
However, the explanation of why meaningful, safety-critical representations of refusal across a broad range of harmful prompts and topics collapse to a single effective direction or a low-dimensional subspace remained elusive.

In this work, we bridge this gap by studying robustness to refusal ablation attacks analytically and empirically in OLMo-2-0425-1B-Instruct:

\begin{figure}[h]
    \centering
    \subcaptionbox{Low-dim refusal residuals}[0.31\linewidth]{
        \includegraphics[width=0.95\linewidth]{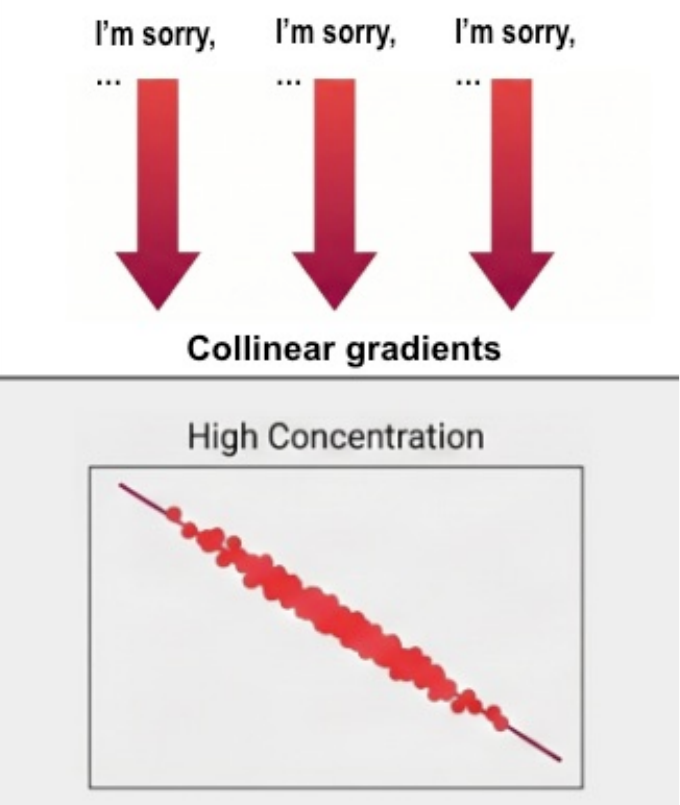}
    }\hfill
    \subcaptionbox{Diverse refusal residuals}[0.31\linewidth]{
        \includegraphics[width=0.95\linewidth]{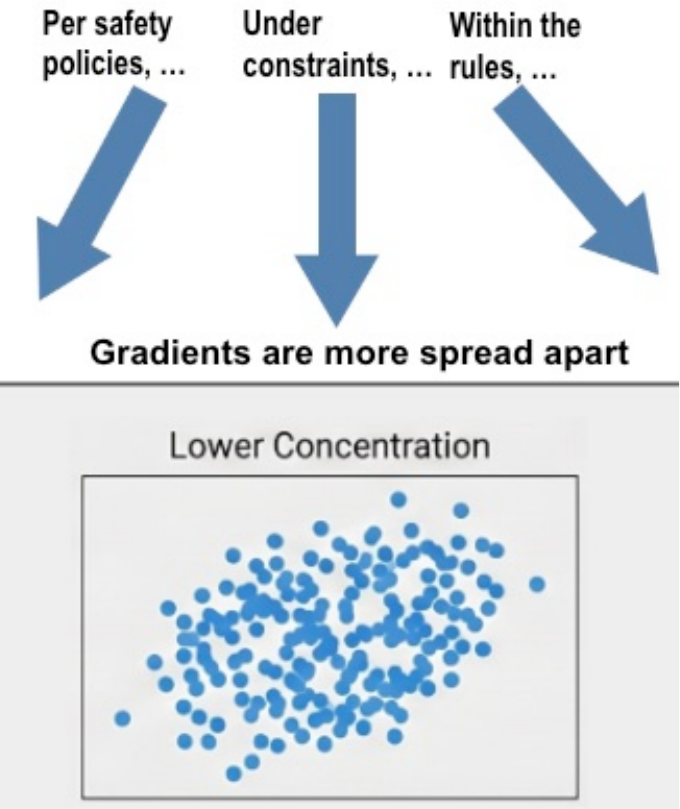}
    }\hfill
    \subcaptionbox{Refusal ablation attack}[0.30\linewidth]{
        \includegraphics[width=\linewidth]{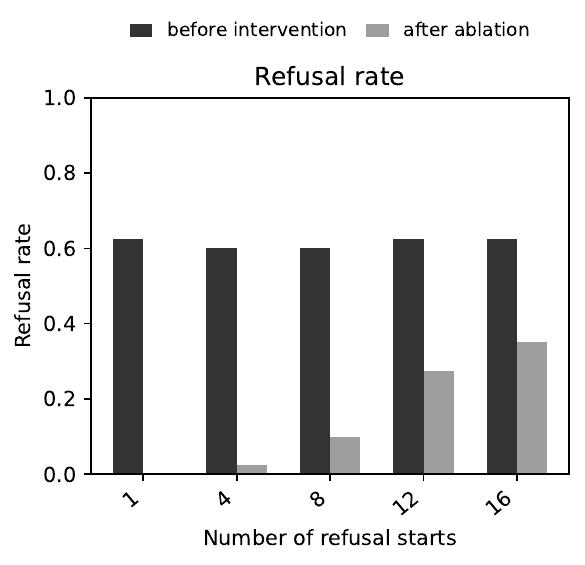}
    }
    \caption{Higher stable rank resists refusal ablation (conceptual schematic). When residuals are less concentrated, a shared single-vector ablation cannot fully disable the refusal mechanism, so the refusal score after attack remains higher.}
    \label{fig:infographic}
\end{figure}
\Cref{fig:infographic} illustrates our hypothesized mechanism: concentrated update directions produce refusal residuals with lower effective dimensionality, making refusal more vulnerable to a shared single-vector ablation.
Concretely, in these settings, we show the following contributions:
\begin{itemize}
    \item \textbf{Training-origin geometry.}
    We show that cross-entropy losses on the first tokens of harmful prompts' completions induce activation changes with partial geometric correspondence to the refusal subspace: their means exhibit completion-dependent signed alignment with the refusal direction, and their principal directions partially overlap direction space of refusal subspace. For refusal completions, the per-gradient-step activation-change and final refusal-residual matrices also have comparable low stable rank.
    \item \textbf{Prefix concentration and spectral collapse.} We identify concentrated refusal-start support in safety datasets --- especially repeated first tokens --- as one mechanism that can produce low-stable-rank gradients, activation changes, and refusal residuals.
    \item \textbf{Cross-layer stable rank transfer.} We introduce stable rank transfer factors and show that refusal-diversity-induced stable rank increases propagate back through late layers with bounded ratios across evaluated conditions.
    \item \textbf{Shared effective Jacobian explanation.}
    We define top-\(k\) source-subspace shared maps and a relative
    transfer factor fit error that yields data-dependent two-sided
    stable-rank bounds on the gradients and gradient-induced activation changes.
    On the fitted supports, these maps reconstruct a meaningful though
    incomplete component of their cross-layer relation and account for
    its observed rank amplification or contraction.
    \item \textbf{Rank--vulnerability connection.}
    Across frozen-model analyses and controlled fine-tuning,
    greater refusal-start diversity is associated with higher
    stable ranks and weaker difference-in-means single-vector ablation, providing evidence of a simple hardening lever.
\end{itemize}

\section{Background and definitions}
\label{sec:background}

\subsection{Transformer architecture and notation}
\label{subsec:background_transformer}

We consider a decoder-only transformer language model for next-token prediction with vocabulary $V=\{1,\dots,|V|\}$, hidden size $d$, and $l \in \{0,\dots,L\}$ layers (transformer blocks).
Training examples are pairs token sequences, prompt $x$ and a completion, with tokens from $V$.
We index training examples by $i$.
We use \emph{activation} to denote a column vector of transformer hidden state at a specified layer and token position.
For example \(i\), let \(h_i^{(l)}\in\mathbb R^d\) denote the layer-\(l\) activation at the first-assistant-token prediction position, while, more broadly, all activations across $N$ training examples at layer $l$ and the same final position are denoted as a matrix of transposed activation column vectors:
\begin{align}
H^{(l)} =
\begin{bmatrix}
h^{(l)\top}_{1} \\
\vdots \\
h^{(l)\top}_{N}
\end{bmatrix}
\in \mathbb{R}^{N \times d}.
\end{align}
The model classifier head (unembedding) is parametrized by a matrix
$W\in\mathbb{R}^{d\times |V|}$ (which consists of embedding vectors $w_1, ..., w_{|V|}$) and bias $b\in\mathbb{R}^{|V|}$, yielding logits $o$,
a probability distribution $p$, and a sampled token $y\in V$ with probability $p(y)$:
\begin{align}
  o &= W^\top h^{(L)} + b \in \mathbb{R}^{|V|}, \\
  p &= \mathrm{softmax}(o)\in \Delta^{|V|-1}
      = \Big\{p\in\mathbb{R}^{|V|}:\ p\ge 0,\ \sum_{k=1}^{|V|} p_k = 1\Big\},\qquad
  p_k = \frac{\exp(o_k)}{\sum_{j=1}^{|V|}\exp(o_j)}
\end{align}
All activations in this paper used for refusal-vector estimation are evaluated at the same position, which is the first-assistant token output position, consistent with \parencite{arditi2024refusallanguagemodelsmediated}.
For SVD of a matrix \(M=U_M\Sigma_MV_M^\top\), we write
\(\mathcal V(M)=\mathrm{span}(V_M)\) to denote the right-singular subspace of \(M\).. We define stable rank as
\(\mathrm{sr}(M)=\|M\|_F^2/\|M\|_2^2\).

\subsection{Refusal mediation and existing attacks}

One of the well-known refusal ablation attacks involves estimating a refusal direction from a pair of harmless and harmful datasets \parencite{arditi2024refusallanguagemodelsmediated}.
For a fixed layer \(l\) let \(h_{harm,i},h_{benign,j}\in\RR^d\) denote the corresponding activations for harmful and benign prompts \(i\) and \(j\) evaluated at the first assistant token position as described above, where \(1\le i\le n_h\) and \(1\le j\le n_b\).
Define empirical means for those activations:
\begin{align}
\mu_{harm} \;=\; \frac{1}{n_h}\sum_{i=1}^{n_h} h_{{harm},i},
\qquad
\mu_{benign} \;=\; \frac{1}{n_b}\sum_{j=1}^{n_b} h_{{benign},j},
\end{align}
We operationalize the \emph{refusal direction} with the difference-in-means estimator:
\begin{align}
\rho \;=\; \mu_{harm} - \mu_{benign} \in \RR^d.
\label{eq:dom_refusal_dir_bg}
\end{align}
The \textit{refusal ablation attack} removes a projection on this normalized refusal vector from activations \(h\) at a run-time when a model is prompted, producing new activations $h^{(\mathrm{abl})}$:
\begin{align}
h^{(\mathrm{abl})}
\;\xleftarrow\;
h - \langle h^\top \cdot \hat{\rho} \rangle \hat{\rho}
\label{eq:refusal_ablation_bg}
\end{align}
In the above, $\hat{\rho} = \rho / \lVert\rho\rVert_2$ denotes normalized refusal direction. This refusal ablation attack jailbreaks the model, making it respond to harmful prompts it was trained to refuse.

Additionally, we define \textit{benign-centered refusal residual matrix} as:
\begin{align}
\Delta H
\;=\;
\begin{bmatrix}
(h_{{harm},1}-\mu_{benign})^\top\\
\vdots\\
(h_{{harm},n_h}-\mu_{benign})^\top
\end{bmatrix}
\in \mathbb{R}^{n_h\times d}
\end{align}
Given \Cref{eq:dom_refusal_dir_bg}, we can say that the matrix $\Delta H$ is a matrix of per-prompt refusal residuals $\rho_{i}$: its mean equals the difference-in-means refusal direction:
\begin{align}
\label{eq:residual_mean_equals_refusal_dir_bg}
\mu_\Delta
= \frac{1}{n_h}\sum_{i=1}^{n_h} (h_{{harm},i} - \mu_{benign})
= \mu_{harm} - \mu_{benign}
=
\rho
\in \RR^d
\qquad
\end{align}

Let
$ \Delta H_c = \Delta H-\mathbf 1_{n_h}\mu_\Delta^\top $ denote the centered refusal-residual matrix.
We operationalize the \emph{refusal subspace} as the affine subspace $\mu_\Delta+\mathcal V(\Delta H_c)$, where
\(\mathcal V(\Delta H_c)\) is its direction subspace.


\section{Refusal training gradients shape refusal residuals in transformer}
\label{sec:theory}

We analyze a transformer under supervised refusal training and show how repeated refusal-completion first tokens can concentrate activation gradients and, in turn, refusal residuals.
We conjecture that those refusal-completion gradients at refusal safety-training induce activation changes which can contribute to the refusal direction.
Limitations and assumptions are outlined in \Cref{appendix_limitations}.

\paragraph{Cross-entropy gradients to last layer activations.}
Consider the cross-entropy loss on the first token \(r\in V\) of a
refusal completion, represented by the one-hot target vector \(e_r\):
\begin{align}
\ell^{\mathrm{CE}}(e_r,p)
&=
-\log p(r)
=
-o_r+\log Z(o).
\end{align}
The activation gradient
\(\nabla_h\ell^{\mathrm{CE}}(e_r,p)\) defines the hypothetical
feature-space step
\begin{align}
\label{eq:virtual_update}
h^+
=
h-\gamma\nabla_h\ell^{\mathrm{CE}}(e_r,p),
\end{align}
We use this gradient, also termed as \emph{virtual update} by
\parencite{cha2026weightgrammatrixcaptures}, as a diagnostic to track feature trajectory.
For the last-layer activation \(h\), the gradient is
\begin{align}
\label{eq:ce_loss_h}
g_i
=
\nabla_h\ell_i^{\mathrm{CE}}(e_{r_i},p_i)
=
W(p_i-e_{r_i})
=
-w_{r_i}+Wp_i
\in\mathbb R^d.
\end{align}
Or, in a matrix form, for a set of $N$ examples with target refusals tokens \( r_i \), a one-hot matrix \(E_r=[e_{r_1},\ldots,e_{r_N}]^\top\) representing those refusal tokens, and a prediction matrix \(P=[p_1,\ldots,p_N]^\top\), the cross-entropy gradients to last-layer activations:
\begin{align}
\label{eq:last_layer_grads}
G_H^{\mathrm{CE}} =
(P - E_r) W^\top \in \mathbb{R}^{N \times d}.
\end{align}
The mean activation gradient is:
\begin{align}
\label{eq:mean_ce_loss_h}
\bar g
= \frac{1}{N}\sum_{i=1}^N g_i
= W(\bar p - \bar e_r) = - \bar w_r + W\bar p\in\RR^d.
\end{align}
In the \emph{static refusal} case with $r_i=r$ for all $i$ and $\bar e_r = e_r$, the mean direction becomes: $\bar g = - w_r + W\bar p $.





\paragraph{Shared direction across training example gradients.}
We can observe in \Cref{eq:ce_loss_h} and \Cref{eq:mean_ce_loss_h} that in case of static, repeating refusal token $r$, gradients across $N$ examples will be concentrated around shared direction $w_r$.
In case other term $W p$ does not cancel or bias $w_r$ too much (for example, when $p$ has large mass elsewhere), gradients concentrated around $w_r$ will be low rank.

\paragraph{From gradients to gradient-induced activation updates.}
Activations are changed after parameter updates (training steps) minimizing aforementioned loss: the raw gradient \(g_i\) is not itself the realized activation change.
Concretely, let \(J_i:=\nabla_\theta h_i(\theta)\) be its Jacobian.
Then, \(\nabla_\theta \ell_i^{\mathrm{CE}}=J_i^\top g_i\).
Under a small parameter update the realized change would be:
\begin{align}
\label{eq:first_order_hidden_state_change}
h_i(\theta-\eta\nabla_\theta \ell_i^{\mathrm{CE}})-h_i(\theta)
\approx
-\eta J_iJ_i^\top g_i .
\end{align}
We define the \emph{gradient-induced activation update} as the realized finite difference, taken with a negative sign for gradient-aligned comparison:
\begin{align}
\label{eq:gradient_induced_update_vector}
\mathfrak g_i
=
-
\left(
h_i(\theta-\eta\nabla_\theta \ell_i^{\mathrm{CE}})
-
h_i(\theta)
\right)
\approx
\eta J_iJ_i^\top g_i
\in\RR^d.
\end{align}
Stacking these updates across $N$ examples gives the \textit{gradient-induced activation update matrix} \(\mathcal G\):
\begin{align}
\label{eq:gradient_induced_update_matrix}
\mathcal G
=
\begin{bmatrix}
\mathfrak g_1^\top\\
\vdots\\
\mathfrak g_N^\top
\end{bmatrix}
\in\RR^{N\times d}.
\end{align}





\paragraph{Training-time origin of refusal directions}
Assuming gradient descent over harmful prompts paired with refusal completions, we hypothesize
that the refusal subspace $\mu_\Delta + \mathcal V(\Delta H_c)$ is approximated by the
sign-reversed affine subspace of gradient-induced activation updates $-\mu_{\mathcal G} + \mathcal V(\mathcal G_c)$, where $\mathcal G_c$ denotes a mean-centered matrix $\mathcal G$
\begin{align}
\label{eq:activation_gradient_affine_subspace}
\mu_\Delta + \mathcal V(\Delta H_c)
\approx
-\mu_{\mathcal G}+\mathcal V(\mathcal G_c).
\end{align}

Concretely, we test the above prediction using these three diagnostics:
\begin{align}
\label{eq:activation_gradient_mean_alignment}
\cos(\mu_\Delta,\mu_{\mathcal G})
&\approx -1,
&&\text{training steps move activations opposite to gradients,}
\\
\label{eq:activation_gradient_subspace_alignment}
\mathcal V(\Delta H_c)
&\approx \mathcal V(\mathcal G_c),
&&\text{right singular vectors span similar subspaces,}
\\
\label{eq:activation_gradient_stable_rank_alignment}
\operatorname{sr}\!\left(\Delta H\right)
&\approx
\operatorname{sr}\!\left(\mathcal G\right)
&&\text{their stable ranks are very close.}
\end{align}




\paragraph{Stable rank of first-token target matrix.}
Let \(E_r\in\RR^{N\times |V|}\) be the one-hot matrix of refusal first tokens \(r_1,\ldots,r_N\).
Across $N$ examples, for a token $k \in V$, let \(c_k = \#\{i:r_i=k\}\) denote its count , \(f_k = c_k/N\) denote its empirical frequency, and let \(m = |\{k:c_k>0\}|\) be the number of observed first-token buckets.
The following lemma, whose full  statement, proof, and a corollary appear in \Cref{app:Er_stable_rank}, shows that first-token concentration directly controls the stable rank of \(E_r\).
\begin{lemma}[Stable rank of refusal first-token targets]
\label{lem:Er_stable_rank}
The stable rank of \(E_r\) is
\begin{align}
\sr(E_r)
=
\frac{N}{\max_k c_k}
=
\frac{1}{\max_k f_k}.
\label{eq:Er_stable_rank_main}
\end{align}
\end{lemma}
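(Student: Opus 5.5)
The plan is to compute the two ingredients of \(\sr(E_r)=\|E_r\|_F^2/\|E_r\|_2^2\) separately, using the Gram matrix \(E_r^\top E_r\in\RR^{|V|\times|V|}\). Its eigenvalues are the squared singular values of \(E_r\).

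First, the Frobenius norm. Each row of \(E_r\) is a one-hot vector \(e_{r_i}^\top\), so it contributes exactly \(1\) to the sum of squared entries. Hence \(\|E_r\|_F^2=N\).

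Second, the spectral norm. I would compute the Gram matrix directly:
\[
E_r^\top E_r=\sum_{i=1}^N e_{r_i}e_{r_i}^\top=\sum_{k\in V}c_k\, e_ke_k^\top=\operatorname{diag}(c_1,\ldots,c_{|V|}),
\]
since all \(i\) with \(r_i=k\) contribute the same rank-one term \(e_ke_k^\top\). This matrix is diagonal and positive semidefinite, so its largest eigenvalue is \(\max_k c_k\). Therefore \(\|E_r\|_2^2=\lambda_{\max}(E_r^\top E_r)=\max_k c_k\).

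Equivalently, the columns of \(E_r\) are indicator vectors of the disjoint buckets \(\{i:r_i=k\}\). They are mutually orthogonal with squared norms \(c_k\), so the nonzero singular values are \(\sqrt{c_k}\) for the \(m\) observed tokens. This also gives \(\rank(E_r)=m\), which I would note for the corollary.

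Dividing the two quantities gives \(\sr(E_r)=N/\max_k c_k\). Substituting \(f_k=c_k/N\) rewrites this as \(1/\max_k f_k\).

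There is no real obstacle here. The only step requiring care is identifying \(E_r^\top E_r\) as diagonal, which follows because distinct tokens have orthogonal one-hot vectors. The natural corollary is the sandwich \(1\le\sr(E_r)\le m\). The upper bound \(m\) is attained exactly when the first tokens are uniformly distributed over the \(m\) buckets, and the lower bound \(1\) is attained in the static-refusal case \(r_i=r\).
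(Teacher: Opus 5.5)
Your proposal is correct and follows the paper's proof essentially step for step: you show \(E_r^\top E_r=\operatorname{diag}(c_1,\ldots,c_{|V|})\) (the paper checks this entrywise via products of indicators, you via the sum of rank-one terms \(e_{r_i}e_{r_i}^\top\)), read off \(\|E_r\|_2^2=\max_k c_k\) and \(\|E_r\|_F^2=N\), and divide. Your corollary remark is also right, but the bound \(m\) can be attained only when \(m\mid N\); otherwise the balanced maximum is \(N/\lceil N/m\rceil\).
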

In particular, \(\sr(E_r)=1\) when all examples share the same first refusal token.
For a fixed number \(m\) of observed first-token buckets, \(\sr(E_r)\) is maximized by balanced counts across the buckets
\begin{align}
\max \sr(E_r)
=
\frac{N}{\lceil N/m\rceil}.
\label{eq:Er_stable_rank_balanced_main}
\end{align}
Thus, a repeated refusal-completion first token, such as the token beginning “I'm sorry,” minimizes \(\operatorname{sr}(E_r)\), while balanced first-token frequencies increase it.
Since the last-layer first-token gradient matrix is \(G_H^{\mathrm{CE}}=(P-E_r)W^\top\), this gives a testable prediction: when the prediction term \(P\) and unembedding \(W\) do not collapse this target-rank signal, we can expect that increasing refusal first-token diversity can increase the stable rank of gradients \(G\) and their corresponding gradient-induced activation updates \(\mathcal G\).
We therefore report \(\operatorname{sr}(E_r)\), \(\operatorname{sr}(P-E_r)\), and \(\operatorname{sr}((P-E_r)W^\top)\) separately to measure $P$ and $W^\top$ preserve or distort the first-token target-matrix signal.


\paragraph{Stable rank distortion across layers: an idealized case.}
Assume in idealized case that for some corresponding Jacobian
$
J^{(l)} = 
\frac{\partial h^{(l)}}
     {\partial h^{(l-1)}}
\in \mathbb{R}^{d \times d}
$, gradients at previous layer $l-1$ satisfy
\begin{align}
G^{(l-1)} = G^{(l)}J^{(l)},
\label{eq:layerwise_gradient_matrix_recursion}
\end{align}
while that Jacobian is shared across training examples.
Assume also that the Jacobian is well-conditioned, i.e. its condition number $\kappa(J^{(l)}) = \|J^{(l)}\|_2\|(J^{(l)})^{-1}\|_2$ is small.
Under these assumptions, an inequality below bounds how much stable rank can change throughout remaining layers $1 \leq l \leq L-1$ under this layerwise map; see \Cref{app:sr_conditioned_factor} for the proof.

\begin{theorem}[Bounded stable rank distortion under a well-conditioned factor]
\label{thm:sr_well_conditioned}
Let \(A\in\RR^{N\times n}\), let \(B\in\RR^{n\times n}\) be invertible, and let \(\kappa(B) = \|B\|_2\|B^{-1}\|_2\) be a condition number.
Then
\begin{align}
\frac{1}{\kappa(B)^2}\sr(A)
\le
\sr(AB)
\le
\kappa(B)^2\sr(A).
\label{eq:sr_two_sided_condition_main}
\end{align}
\end{theorem}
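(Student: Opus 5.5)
The plan is to bound the numerator and the denominator of \(\sr(AB)=\|AB\|_F^2/\|AB\|_2^2\) separately, each with a two-sided inequality in terms of the extreme singular values of \(B\), and then combine. Write \(\sigma_{\max}(B)=\|B\|_2\) and \(\sigma_{\min}(B)=1/\|B^{-1}\|_2>0\); the latter is positive because \(B\) is invertible. We may assume \(A\neq 0\): then \(AB\neq0\) and both stable ranks are defined.

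The key inequality is the vector bound
\begin{align}
\sigma_{\min}(B)\|x\|_2\le\|B^\top x\|_2\le\sigma_{\max}(B)\|x\|_2 \qquad \text{for all } x\in\RR^n .
\end{align}
It holds because \(B^\top\) has the same singular values as \(B\).

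\textbf{Frobenius norm.} Apply the vector bound to each row \(a_i^\top\) of \(A\): the \(i\)-th row of \(AB\) is \((B^\top a_i)^\top\). Summing the squares over \(i\) gives
\begin{align}
\sigma_{\min}(B)^2\|A\|_F^2\le\|AB\|_F^2\le\sigma_{\max}(B)^2\|A\|_F^2 .
\end{align}

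\textbf{Spectral norm.} Submultiplicativity gives \(\|AB\|_2\le\|A\|_2\sigma_{\max}(B)\). For the lower bound, write \(A=(AB)B^{-1}\), so that \(\|A\|_2\le\|AB\|_2\|B^{-1}\|_2\), i.e.\ \(\|AB\|_2\ge\sigma_{\min}(B)\|A\|_2\). Hence
\begin{align}
\sigma_{\min}(B)^2\|A\|_2^2\le\|AB\|_2^2\le\sigma_{\max}(B)^2\|A\|_2^2 .
\end{align}

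\textbf{Combining.} For the upper bound, divide the largest admissible numerator by the smallest admissible denominator:
\begin{align}
\sr(AB)\le\frac{\sigma_{\max}^2\|A\|_F^2}{\sigma_{\min}^2\|A\|_2^2}=\kappa(B)^2\sr(A).
\end{align}
The lower bound follows symmetrically from the smallest numerator over the largest denominator, giving \(\sr(AB)\ge\sr(A)/\kappa(B)^2\). Alternatively, it follows from the upper bound applied to \(AB\) and \(B^{-1}\), since \(\kappa(B^{-1})=\kappa(B)\) and \(\sr(A)=\sr((AB)B^{-1})\le\kappa(B)^2\sr(AB)\).

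There is no real obstacle in this argument. The only point needing care is that the lower bound on \(\|AB\|_2\) requires invertibility (or at least full row rank of \(B\)); for a general square \(B\) it would fail. This is exactly where the hypothesis on \(B\) enters.
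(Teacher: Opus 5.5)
Your proof is correct and is essentially the paper's argument: the upper bound comes from $\|AB\|_F\le\|A\|_F\|B\|_2$ together with $\|AB\|_2\ge\|A\|_2/\|B^{-1}\|_2$ (from $A=(AB)B^{-1}$), and your alternative lower-bound route, applying the upper bound to $(AB)B^{-1}$ with $\kappa(B^{-1})=\kappa(B)$, is exactly what the paper does. Your primary lower-bound route via two-sided Frobenius and spectral bounds is a harmless variant, and your $A\neq 0$ remark makes explicit a well-definedness condition the paper leaves implicit.
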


Applying \Cref{thm:sr_well_conditioned} to \Cref{eq:layerwise_gradient_matrix_recursion} gives
\begin{align}
\frac{1}{\kappa(J^{(l)})^2}\sr(G^{(l)})
\le
\sr(G^{(l-1)})
\le
\kappa(J^{(l)})^2\sr(G^{(l)}).
\label{eq:sr_across_one_layer_main}
\end{align}
Iterating from layer \(L\) to layer \(l\) yields
\begin{align}
\frac{\sr(G^{(L)})}{\prod_{k=l+1}^{L}\kappa(J^{(k)})^2}
\le
\sr(G^{(l)})
\le
\left(\prod_{k=l+1}^{L}\kappa(J^{(k)})^2\right)\sr(G^{(L)}).
\label{eq:sr_across_many_layers_main}
\end{align}
Assuming in our case $G^{(L)} = G_H^{\mathrm{CE}} = (P-E_r)W^\top$ as per \Cref{eq:last_layer_grads}, we obtain the layerwise prediction for $l$-layer gradients $G^{(l)}$, bounded by functions of $G^{(L)}$ from both sides:
\begin{align}
\frac{\sr((P-E_r)W^\top)}{\prod_{k=l+1}^{L}\kappa(J^{(k)})^2}
\le
\sr(G^{(l)})
\le
\left( \prod_{k=l+1}^{L}\kappa(J^{(k)})^2 \right)\sr((P-E_r)W^\top).
\label{eq:sr_layerwise_from_last_token_main}
\end{align}
This gives another testable prediction: under our idealized conditions, stable rank increases of last- layer's gradients $G^{(L)}$ can also propagate back, increasing stable rank of layer-$l$ gradients $G^{(l)}$.
The next parts refine the idealized condition above by introducing a data-dependent variant of a shared factor and data-dependent stable rank transfer factors in order to construct tighter bounds.


\begin{definition}[Stable rank transfer factor]
For nonzero gradient matrices $G^{(L)}$ and $G^{(l)}$ computed for the same $N$ examples at layers $L$ and $l$ respectively, their stable rank transfer factor is the ratio of their stable ranks:
\begin{align}
\tau^{L\to l}
:=
\frac{
\operatorname{sr}\left(G^{(l)}\right)
}{
\operatorname{sr}\left(G^{(L)}\right)
}.
\end{align}
Thus, $\tau^{L\to l}>1$ denotes stable rank amplification, while $\tau^{L\to l}<1$ denotes stable rank contraction between the selected layers. We similarly define transfer factors for gradient-induced updates $\mathcal G$.
\end{definition}

To refine impractical assumptions of \Cref{eq:layerwise_gradient_matrix_recursion}, instead of relying on an idealized, shared Jacobian, we introduce an appropriate approximation, or a shared cross-layer relation map.
\begin{definition}[Top-$k$ source-subspace shared map]
For nonzero gradient matrices $G^{(L)}$ and $G^{(l)}$ computed for the same $N$ examples at layers $L$ and $l$ respectively, a top-$k$ source-subspace shared map $\widehat J_k^{L\to l}$ is a shared data-dependent low-rank linear approximation for the observed cross-layer relation between gradients fitted on $N$ examples and restricted to a top-$k$ principal subspace of $G^{(L)}$. It is defined as follows: 
\begin{align}
\widehat J_k^{L\to l} = V_k^{(L)} \Gamma_k^{L\to l}, \\
\widehat G_k^{(l)} = G^{(L)}\widehat J_k^{L\to l}.
\end{align}
where matrix $V_k^{(L)} \in \mathbb R^{d\times k}$ denotes top $k$ right singular vectors of $G^{(L)}$, therefore $\operatorname{rank}(\widehat J_k^{L\to l}) \leq k$.
$\Gamma_k^{L\to l} \in \mathbb R^{k\times d}$ is a coefficient matrix estimated from $N$ gradients.
$\widehat G_k^{(l)}$ is the layer-$l$ gradient matrix implied by the top-$k$ source-subspace shared map $\widehat J_k^{L\to l}$.

For a nonzero $\widehat G_k^{(l)}$, the map-implied stable rank transfer factor is the stable rank transfer factor our shared map implies on $G^{(L)}$: 
\begin{align}
\widehat\tau_k^{L\to l}=
\frac{
\operatorname{sr}\left(\widehat G_k^{(l)}\right)
}{
\operatorname{sr}\left(G^{(L)}\right)
}
\end{align}
\end{definition}

Next, we account of errors resulting from the above approximation.
\begin{definition}[Relative transfer factor fit error]
For the observed $\tau^{L\to l}$ stable rank transfer factor, a shared top-$k$ source-subspace map $\widehat J_k^{L\to l}$, and a stable rank transfer factor $\widehat\tau_k^{L\to l}$ implied by it, define a relative transfer factor error of $\widehat J_k^{L\to l}$ as follows:
\begin{align}
\epsilon_{\tau,k}^{L\to l}
=
\frac{
\left|
\widehat\tau_k^{L\to l} -
\tau^{L\to l}
\right|
}{
\tau^{L\to l}
}.
\end{align}
\end{definition}

\Cref{thm:sr_well_conditioned} gives a worst-case condition-number bound under an idealized shared linear factor. Lemma~\ref{lem:stable-rank-transfer-fit-bound} instead shows how an upper bound on the transfer-factor fit error yields a data-dependent two-sided bound on the stable rank of $G^{(l)}$ (see Lemma~\ref{lem:stable-rank-transfer-fit-bound-proof} for the proof); we will later estimate that upper bound empirically.

\begin{lemma}[Stable rank bounds from transfer factor fit error]
\label{lem:stable-rank-transfer-fit-bound}

Let \(G^{(L)},G^{(l)}\in\mathbb{R}^{N\times d}\) be nonzero gradient
matrices, and let \(\widehat{J}_k^{L\to l}\) be a top-$k$ source-subspace
shared map with fitted output
\[
\widehat{G}_k^{(l)}
=
G^{(L)}\widehat{J}_k^{L\to l}.
\]
Suppose that the relative transfer-factor fit error is bounded by some $\epsilon_{max}$:
\[
\epsilon_{\tau,k}^{L\to l} \leq \epsilon_{max} <1.
\]
Then
\[
\frac{
    \widehat{\tau}_k^{L\to l}
}{
    1+\epsilon_{max}
}
\operatorname{sr}\!\left(G^{(L)}\right)
\leq
\operatorname{sr}\!\left(G^{(l)}\right)
\leq
\frac{
    \widehat{\tau}_k^{L\to l}
}{
    1-\epsilon_{max}
}
\operatorname{sr}\!\left(G^{(L)}\right).
\]
\end{lemma}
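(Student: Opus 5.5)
The argument is purely algebraic: it unpacks the definitions of the two transfer factors and the relative fit error, then rescales by \(\operatorname{sr}(G^{(L)})\).

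\emph{Positivity.} Since \(G^{(L)}\) and \(G^{(l)}\) are nonzero, both \(\operatorname{sr}(G^{(L)})\) and \(\operatorname{sr}(G^{(l)})\) are well defined and at least \(1\). Hence \(\tau:=\tau^{L\to l}>0\). We also take \(\widehat G_k^{(l)}\) nonzero, which the definition of \(\widehat\tau_k^{L\to l}\) already requires, so \(\widehat\tau:=\widehat\tau_k^{L\to l}>0\).

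\emph{Two-sided bound on \(\widehat\tau\).} By definition,
\[
\epsilon_{\tau,k}^{L\to l}=\frac{|\widehat\tau-\tau|}{\tau}.
\]
The hypothesis \(\epsilon_{\tau,k}^{L\to l}\le\epsilon_{max}\) therefore gives
\[
|\widehat\tau-\tau|\le\epsilon_{max}\,\tau,
\]
which is equivalent to
\[
(1-\epsilon_{max})\,\tau\;\le\;\widehat\tau\;\le\;(1+\epsilon_{max})\,\tau .
\]

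\emph{Inverting to bound \(\tau\).} Because \(\epsilon_{max}<1\), both \(1-\epsilon_{max}\) and \(1+\epsilon_{max}\) are positive. Dividing each inequality by the corresponding factor keeps its direction and yields
\[
\frac{\widehat\tau}{1+\epsilon_{max}}\;\le\;\tau\;\le\;\frac{\widehat\tau}{1-\epsilon_{max}} .
\]
This is the one step that needs care: the assumption \(\epsilon_{max}<1\) is exactly what makes the upper bound finite and keeps the inequality orientation intact.

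\emph{Conclusion.} Substituting \(\tau=\operatorname{sr}(G^{(l)})/\operatorname{sr}(G^{(L)})\) and multiplying through by \(\operatorname{sr}(G^{(L)})>0\) gives the stated bounds on \(\operatorname{sr}(G^{(l)})\). No property of the top-\(k\) structure of \(\widehat J_k^{L\to l}\) is used beyond \(\widehat G_k^{(l)}\) being nonzero. Consequently, the same argument applies verbatim to the gradient-induced updates \(\mathcal G\).
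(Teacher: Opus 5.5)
Your proposal is correct and follows the paper's proof essentially step for step. You rewrite the fit-error hypothesis as \(|\widehat\tau_k^{L\to l}-\tau^{L\to l}|\le\epsilon_{max}\tau^{L\to l}\), invert the two-sided bound using \(\epsilon_{max}<1\), and rescale by \(\operatorname{sr}(G^{(L)})\). Your explicit check that \(\tau^{L\to l}>0\) is a small addition the paper leaves implicit.
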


When $\epsilon_{max}$ is obtained by measuring the fit error on the
same \(N\) examples used to estimate \(\widehat J_k^{L\to l}\),
Lemma~\ref{lem:stable-rank-transfer-fit-bound} is an in-sample fit
certificate rather than an independent prediction of
\(\operatorname{sr}(G^{(l)})\). It becomes a predictive propagation
bound only when an independently established error bound
\(\epsilon_{\max}\) is assumed to continue holding for additional source
gradient matrices. Moreover, agreement at the level of stable rank
does not imply matrix-level agreement; we evaluate the latter
separately using relative Frobenius error, relative spectral error,
and row-wise cosine similarity.






\section{Empirical validation}
\label{sec:empirical}

\subsection{Experimental setup}

\paragraph{Target models.}
We perform measurements against \texttt{allenai/OLMo-2-0425-1B-Instruct}, an instruction-tuned OLMo 2 1B model with 16 transformer blocks, 16 attention heads, and hidden dimension \(d=2048\).
For the training-stage analysis in \Cref{subsec:training_stage_emergence}, we evaluate its additional checkpoints: 
\texttt{allenai/OLMo-2-0425-1B} (\texttt{olmo\_base}),
\texttt{allenai/OLMo-2-0425-1B-SFT} (\texttt{olmo\_sft}),
\texttt{allenai/OLMo-2-0425-1B-DPO} (\texttt{olmo\_dpo}),
\texttt{allenai/OLMo-2-0425-1B-RLVR1} (\texttt{olmo\_rlvr}), and
\texttt{allenai/OLMo-2-0425-1B-Instruct} (\texttt{olmo\_inst}) \parencite{allenai2025olmo204251binstruct}.

\paragraph{Target datasets.}
We use AdvBench (\texttt{adv}),
WildJailbreak evaluation and training splits (\texttt{wld}, \texttt{wld\_tr}),
and XSTest-Response subsets (\texttt{xst\_hr}, \texttt{xst\_hr\_jb}, \texttt{xst\_rf}, \texttt{xst\_rf\_jb})
as target prompt sets.
Our frozen-model ablation analyses on these datasets use layer \(7\) for refusal vector estimation.
For controlled fine-tuning, we use CAMEL chemistry dataset \textit{camel-ai/chemistry} \parencite{li2023camel} by pairing its prompts with refusals sampled from a predefined set of refusals we manually collect; we use layer \(8\) to estimate refusal vector.

\paragraph{Metrics.} We use refusal (harm) rate, or a proportion of refused (or harmful) prompts to all prompts as downstream metrics, using \texttt{allenai/wildguard} as a judge LLM for refusal (harm) scores; refusal delta (change under attack) is a difference between pre-intervention and post-intervention refusal rates.
To test the affine-subspace prediction in \Cref{eq:activation_gradient_affine_subspace}, we compare three quantities for a benign-centered residual matrix \(\Delta H\) paired with gradient-induced activation updates \(\mathcal G\) (or, similarly, paired with raw gradients \(G\)): (1) cosine similarity between their \(\Delta H\) and \(\mathcal G\) mean vectors;
(2) stable rank on the uncentered $\Delta H$ and $\mathcal G$;
(3) average principal-angle overlap between their centered top-\(k\) right-singular subspaces.
We report stable ranks and transfer-factor summary statistics across
conditions. Shared-map fits are evaluated using source-subspace energy,
relative Frobenius and spectral errors, row-wise cosine similarity, and
relative transfer-factor fit error, with identity and scalar-identity
baselines.
Full definitions appear in \Cref{app:matrix_metrics}.

\subsection{Refusal direction across released post-training checkpoints}
\label{subsec:training_stage_emergence}

\begin{figure}[h]
    \centering
    \begin{subfigure}[h]{0.31\linewidth}
        \centering
        \includegraphics[
            width=\linewidth,
        ]{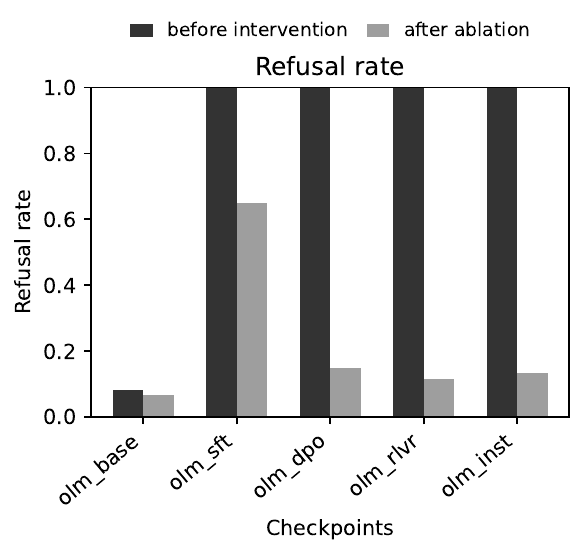}
        \label{fig:_profiler_to_upload2____refusal_decomp___layer15:a}
    \end{subfigure}
    \begin{subfigure}[h]{0.32\linewidth}
        \centering
        \includegraphics[
            width=\linewidth,
        ]{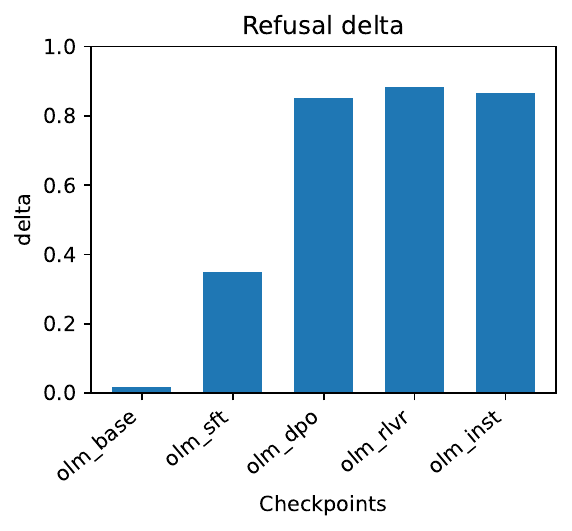}
        \label{fig:_profiler_to_upload2____refusal_decomp___layer15:e}
    \end{subfigure}
    \caption{Descriptive comparison of released OLMo checkpoints along
    Base \(\to\) SFT \(\to\) DPO \(\to\) RLVR1 \(\to\) Instruct.
    The base checkpoint is not attack-effective under this estimator.
    A functional refusal direction is present by SFT, while post-SFT checkpoints exhibit larger refusal deltas than SFT.}
    \label{fig:_profiler_to_upload2____refusal_decomp___layer15}
\end{figure}

We first ask when the difference-in-means vector becomes a functional refusal direction.
Using 60 AdvBench prompts, we estimate refusal vectors $\rho$ at each layer for the OLMo checkpoints and measure refusal rates before/after ablation, refusal vector norm, cosine similarity to the final \texttt{olmo\_inst} direction, and stable rank of \(\Delta H\).
\Cref{fig:_profiler_to_upload2____refusal_decomp___layer15} shows that the base checkpoint has little refusal behavior and a negligible refusal delta under this estimator, whereas all four post-base checkpoints refuse nearly all selected prompts before intervention.
A functional refusal-mediating direction is present at the SFT checkpoint, but its refusal delta is substantially smaller than those of the DPO, RLVR1, and Instruct checkpoints.
Because the checkpoints differ in their objectives, data, and optimization histories, this comparison does not isolate the causal effect of any individual post-training stage.
\Cref{fig:_profiler_to_upload2____refusal_decomp___layer15__pairwise} shows that the refusal vectors estimated at SFT and the later checkpoints remain highly aligned with the final \texttt{olmo\_inst} direction across layers, while their stable rank and vector norm profiles largely overlap.
Thus, the released post-SFT checkpoints approximately preserve the refusal geometry even though SFT is less attack-effective in our settings.
Finally, \Cref{fig:_profiler_to_upload2_additional_layer} shows that, for \texttt{olmo\_inst}, directions estimated at layers 7--10 produce the largest refusal deltas.

\subsection{Gradient-induced activation updates match benign-centered refusal residuals}
\label{subsec:gradient_update_residual_match}

We next test the affine-subspace prediction in \Cref{eq:activation_gradient_affine_subspace}: benign-centered refusal residuals \(\Delta H\) should share their mean direction, principal subspace, and stable rank with gradient-induced activation updates \(\mathcal G\), up to the sign induced by gradient descent.
Using the target datasets introduced above, we compute \(\Delta H\) and \(\mathcal G\) for three target-completion types: (1) model-generated refusal completions, (2) post-ablation completions, and (3) fixed pseudorandom English-token completions.
For each dataset and output type, we compare \(\Delta H\) and \(\mathcal G\) using mean-direction cosine similarity, average principal-angle overlap between centered right-singular subspaces, and stable rank.
We make similar comparisons for \(\Delta H\) and raw gradients \(G\) as well.

\begin{figure}[h]
    \centering
    \begin{minipage}{\linewidth}
        \centering

        \includegraphics[width=0.42\linewidth]{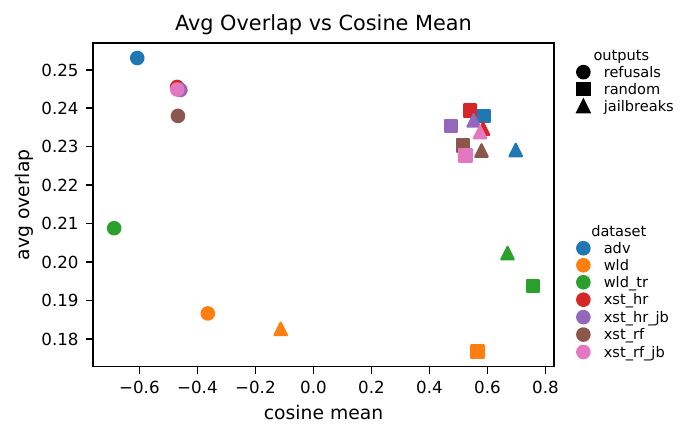}
        \hfill
        \includegraphics[width=0.42\linewidth]{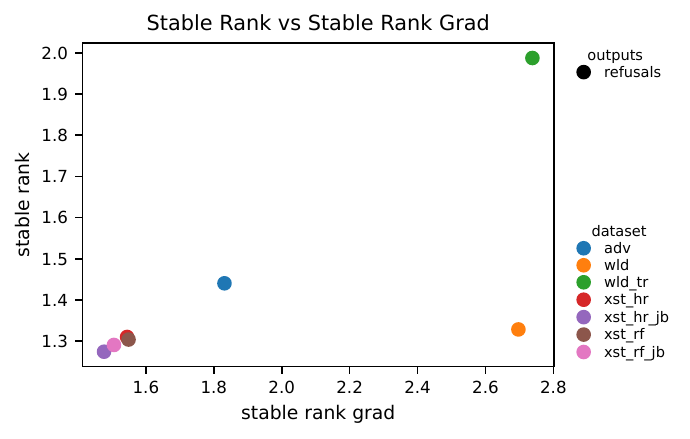}

        \vspace{0.25em}

        \includegraphics[width=0.42\linewidth]{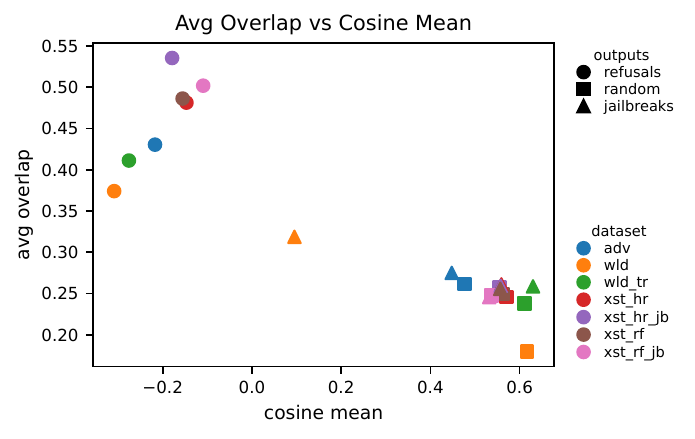}
        \hfill
        \includegraphics[width=0.42\linewidth]{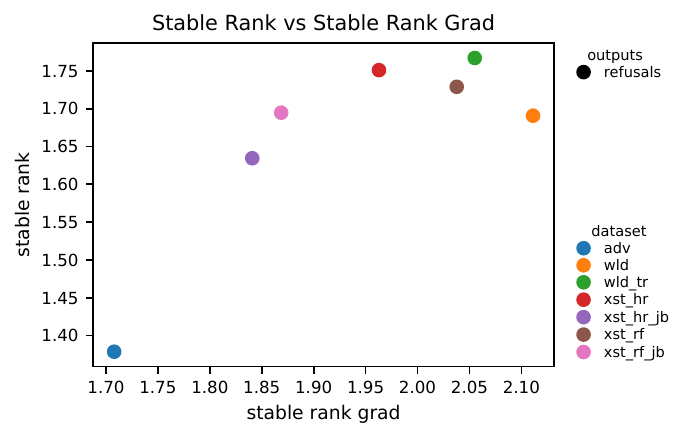}

        \caption{Gradient-induced activation updates explain benign-centered refusal residuals. Left column: mean alignment and average principal-angle overlap. Right column: stable rank comparison. Top row: layer \(7\); bottom row: layer \(15\).}
        \label{fig:gradient_update_residual_metrics}
    \end{minipage}
\end{figure}

The left column of \Cref{fig:gradient_update_residual_metrics} shows that refusal-based means are negatively aligned with the difference-in-means refusal vector, while ablated-jailbreak-based and random-English-based means are positively aligned with it.
Thus, refusal and non-refusal targets move along approximately the same refusal axis but in opposite directions, supporting \Cref{eq:activation_gradient_mean_alignment}.
This effect appears primarily for the refusal directions estimated at middle layer \(7\) and partially from the last layer \(15\).
The same column also shows average principal-angle overlap between \(\mathcal V(\Delta H_c)\) and \(\mathcal V(\mathcal G_c)\).
At layer \(15\), refusal-target updates subspace overlap refusal subspace more than both
random-English and ablated-jailbreak controls gradient update subspaces for every evaluated dataset, indicating stronger last-layer agreement between direction subspaces of refusal gradient updates \(\mathcal V(\mathcal G_c)\) and refusal subspace \(\mathcal V(\Delta H_c)\).
At layer \(7\), overall subspace overlap is lower, and the overlap appears to be more strongly associated with
the dataset than with the completion type: the three target types within a dataset are relatively close, while the evaluation split of WildJailbreak (\texttt{wld}) has lower overlap than its training split (\texttt{wld\_tr}) and most other datasets.

The right column of \Cref{fig:gradient_update_residual_metrics} shows that, for refusal targets, the stable ranks of \(\mathcal G\) and \(\Delta H\) are close across datasets for both layer \(7\) and layer \(15\).
The observed stable ranks are below the sample size \(N=60\), around \(1\)--\(3\), indicating genuine spectral concentration and excluding trivial rank cap from insufficient datapoints.
We observe the same qualitative relationship across most transformer layers.

Together, these three diagnostics support the main empirical
implication of \Cref{eq:activation_gradient_affine_subspace}: signed mean-direction
alignment, partial and layer-dependent principal-subspace overlap, and
closely matched stable ranks.
They support an approximate affine-subspace relationship.

As a complementary uncentered diagnostic, at the refusal-estimating
layer \(7\), the leading right-singular axis of \(\mathcal G\) has
mean absolute cosine similarity \(0.56\) with the refusal direction
and \(0.96\) with the mean of \(\mathcal G\) across the seven evaluated
datasets.
Thus, the dominant uncentered gradient-update axis of \(\mathcal G\) is nearly collinear, up to
sign, with the mean gradient-update vector of \(\mathcal G\) and moderately
aligned with the refusal direction.
Together with the low stable rank of \(\mathcal G\), this makes the
mean-direction and leading-axis descriptions of its update geometry
nearly equivalent in this setting and motivates a direct comparison
between difference-in-means refusal ablation and ablation along the
top largest principal directions.


\subsection{Refusal first-token diversity increases gradient stable ranks}
\label{subsec:first_token_diversity_gradient_rank_end_to_end}


We next test on a frozen model whether varying refusal first tokens on a fixed prompt set changes the stable ranks of first-token gradients from the last layer to refusal-mediating middle layers.
Using the WildJailbreak training split, we fix 80 harmful prompts whose model-generated refusals begin with the common first token ``I''.
For each \(m\in\{1,2,4,6,8,10,12,14,16\}\), we pair each prompt with a refusal sampled independently with replacement from a pool of \(m\) short phrases with distinct first tokens.
For each setting, we compute \(E_r\), \(P\), and \(P-E_r\), the last-layer activation gradient matrix \((P-E_r)W^\top\), and the layerwise  gradient matrices \(G^{(l)}\) for the first-token cross-entropy loss.
We use raw gradients in this experiment to test the cross-layer rank-transfer analysis from \Cref{sec:theory}; gradient-induced activation updates are considered in the next subsection.

\begin{figure}[h]
    \centering
    \begin{subfigure}[b]{0.31\linewidth}
        \centering
        \includegraphics[width=\linewidth]{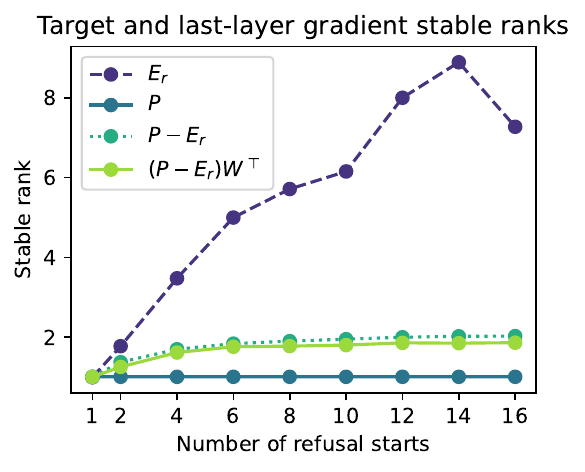}
        \label{fig:4_4_ranks:head}
    \end{subfigure}
    \begin{subfigure}[b]{0.34\linewidth}
        \centering
        \includegraphics[width=\linewidth]{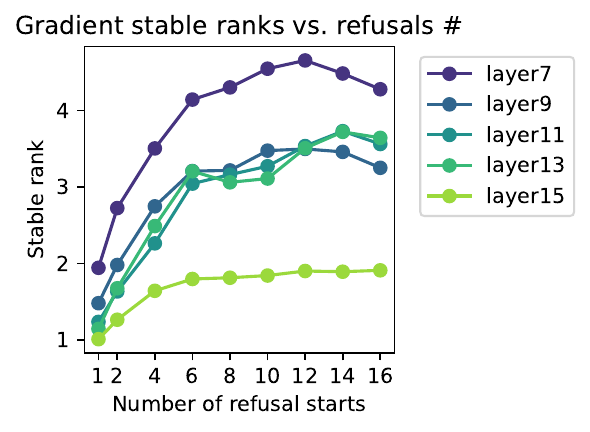}
        \label{fig:4_4_ranks:layers}
    \end{subfigure}
    \begin{subfigure}[b]{0.32\linewidth}
        \centering
        \includegraphics[width=\linewidth]{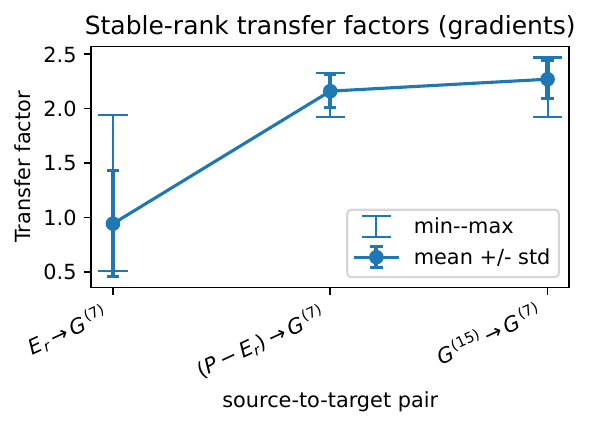}
        \label{fig:4_4_ranks:transfer}
    \end{subfigure}
    \caption{
    Stable ranks and transfer under refusal-start diversification.
    Left: stable ranks of the target-space quantities and the resulting last-layer gradient.
    Middle: stable ranks of the first-token gradients at selected middle-to-late layers.
    Right: across-setting layerwise stable rank transfer factor mean$\pm std$ and min/max.
    The stable ranks of \(E_r\), \(P-E_r\), and \(G^{(l)}\) increase overall.
    The \(G^{(15)}\!\to G^{(7)}\) transfer factor varies substantially less than the end-to-end \(E_r\!\to G^{(7)}\) ratio.
    }
    \label{fig:4_4_ranks}
\end{figure}

\Cref{fig:4_4_ranks} shows that all target-space quantities and gradients across layers 7-15 increase with the number of refusals: gradient stable rank increase is bounded and quickly saturates for the last layer due to constant $P$, but amplified throughout mid-layers on a backward pass.
Across the nine settings, the observed layerwise transfer factor mean \(\pm\) sample standard deviation are
\(\tau^{15\to7}=2.270\pm0.175\), with range \([1.925,2.469]\).
Thus, every evaluated setting exhibits approximately twofold stable-rank amplification from layer 15 to layer 7.
By contrast, the end-to-end \(E_r\!\to G^{(7)}\) ratio is more variable, with mean \(0.942\pm0.486\) and range \([0.504,1.942]\).

For the \(m=16\) setting, we additionally fit the top-\(k\) source-subspace shared map from \(G^{(15)}\) to \(G^{(7)}\) on the same 80 examples and sweep \(k\).
At \(k=20\), the selected source subspace captures \(99.8\%\) of the source-gradient energy; the relative Frobenius and spectral errors are \(0.594\) and \(0.316\), the mean row-wise cosine similarity is \(0.806\), and the relative transfer-factor fit error is \(0.343\).
The fitted map yields lower reconstruction and transfer-factor errors and higher row-wise cosine similarity than the identity and best scalar-identity baselines.
This provides evidence that the observed in-sample cross-layer relation admits a meaningful but incomplete shared linear approximation.
The complete rank sweep and reduced-map spectrum are reported in \Cref{fig:4_4_shared_map,fig:4_4_shared_map_spectrum}.

\subsection{Refusal first-token diversity increases residual rank and weakens single-vector ablation}
\label{subsec:first_token_diversity_rank}

We next test on a frozen model whether increasing the diversity of first tokens across refusal phrases (through varying selection of prompts in the dataset) increases the effective rank of benign-centered refusal residuals and gradient-induced activations updates.
Using the WildJailbreak training split, we select harmful prompts that the target model refuses, keeping the baseline refusal rate fixed across subsets.
We construct nine 80-example subsets drawing from \(m\in\{1,2,4,6,8,10,12,14,16\}\) refusal first-token buckets.
For each subset, we compute benign-centered refusal residuals \(\Delta H\), raw first-token activation gradients \(G^{(l)}\), gradient-induced activation updates \(\mathcal G^{(l)}\), and refusal and harmfulness scores before and after difference-in-means ablation.
Both \(G^{(l)}\) and \(\mathcal G^{(l)}\) are computed using the original WildJailbreak response paired with each selected prompt.
We then plot the ablation score deltas against the stable rank of \(\Delta H\).
We select layer \(7\) as the earliest layer in the range of effective layers.

\begin{figure}[h]
    \centering
    \begin{subfigure}[b]{0.3\linewidth}
        \centering
        \includegraphics[width=\linewidth]{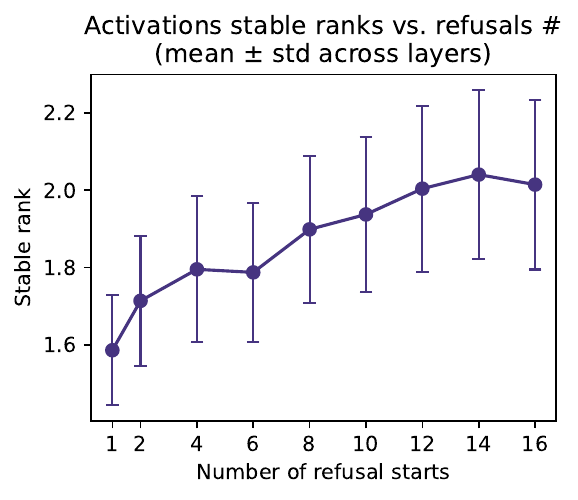}
        \label{fig:____stable_rank_vs_x__layerwise_overlay:a}
    \end{subfigure}
    \begin{subfigure}[b]{0.34\linewidth}
        \centering
        \includegraphics[width=\linewidth]{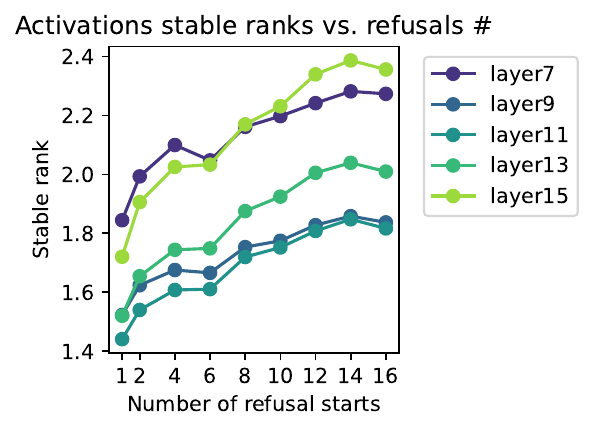}
        \label{fig:____stable_rank_vs_x__layerwise_overlay:aa}
    \end{subfigure}
    \begin{subfigure}[b]{0.33\linewidth}
        \centering
        \includegraphics[width=\linewidth]{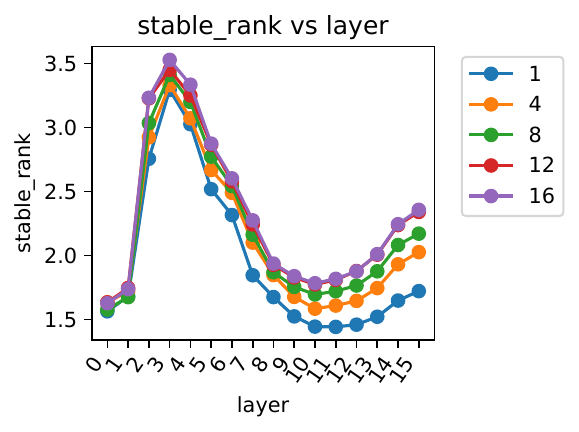}
        \label{fig:____stable_rank_vs_x__layerwise_overlay:b}
    \end{subfigure}
    \caption{Stable rank of benign-centered refusal residuals \(\Delta H\). Left plot averaged ranks over layers 7-15. Increasing refusal first-token diversity raises \(\operatorname{sr}(\Delta H)\) in middle-to-late layers.}
    \label{fig:____stable_rank_vs_x__layerwise_overlay}
\end{figure}

\Cref{fig:____stable_rank_vs_x__layerwise_overlay} shows that increasing the number of refusal first-token buckets raises \(\Delta H\) stable rank.
The effect is visible across middle-to-late layers, while the residual representation at earlier layers retain their stable ranks.
\Cref{fig:____stable_rank_grad_vs_x__layerwise_overlay} shows similar trend for gradient-induced activation updates, but with saturation at the end.
As the refusal first tokens become more diverse, the stable rank of \(\mathcal G\) also increases across most layers, however it saturates at 12-16 refusal starts.

\begin{figure}[h]
    \centering
    \begin{subfigure}[b]{0.32\linewidth}
        \centering
        \includegraphics[width=\linewidth]{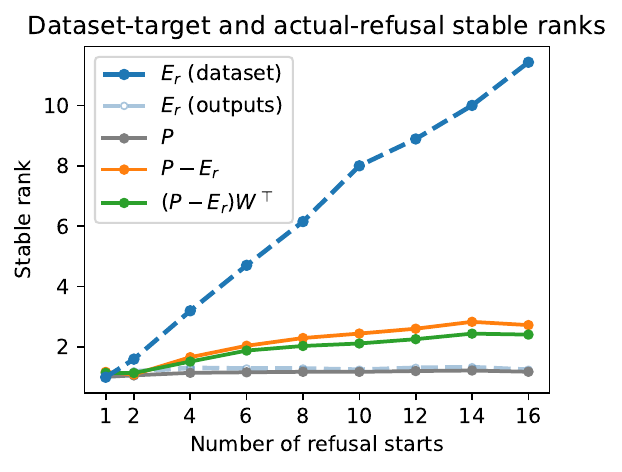}
        \label{fig:4_5_rank_propagation:head}
    \end{subfigure}
    \begin{subfigure}[b]{0.34\linewidth}
        \centering
        \includegraphics[width=\linewidth]{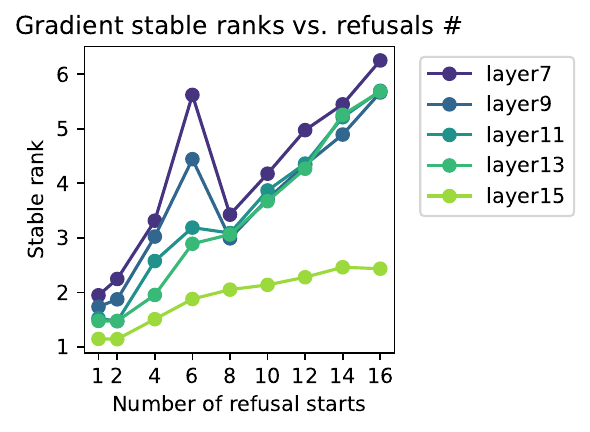}
        \label{fig:4_5_rank_propagation:layers_raw}
    \end{subfigure}
    \begin{subfigure}[b]{0.32\linewidth}
        \centering
        \includegraphics[width=\linewidth]{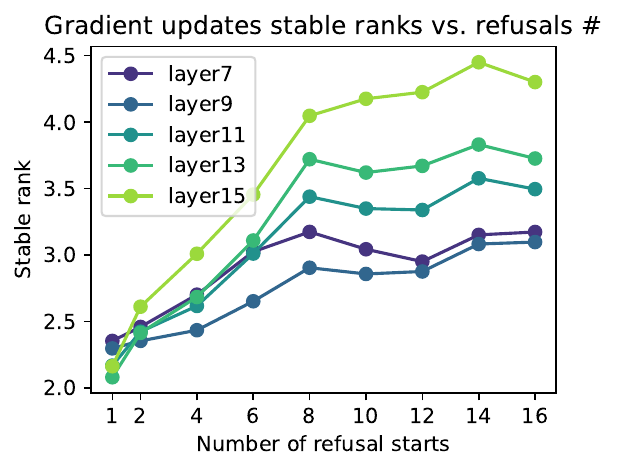}
        \label{fig:4_5_rank_propagation:layers}
    \end{subfigure}
    \caption{
    Stable rank propagation under refusal-start diversification by prompt selection.
    Left: stable ranks of the first-token target matrix \(E_r\), prediction matrix \(P\), logit-gradient matrix \(P-E_r\), and last-layer activation-gradient matrix \((P-E_r)W^\top\).
    Middle: stable ranks of the first-token gradients \(G^{(l)}\).
    Right: stable ranks of the corresponding gradient-induced activation updates \(\mathcal G^{(l)}\).
    The target-space and layerwise stable ranks increase overall as refusal-start support broadens.
    }
    \label{fig:4_5_rank_propagation}
\end{figure}

\Cref{fig:4_5_rank_propagation} shows the same overall rank increase from the dataset-target quantities through the raw gradients \(G^{(l)}\) and the gradient-induced activation updates \(\mathcal G^{(l)}\). Notably, the dataset-derived \(E_r\) rank rises sharply across the prompt subsets, whereas the model’s actual output-refusal-start \(E_r\) rank remains much lower. Thus, the dataset responses contain substantially more first-token diversity than the frozen model realizes in its own completions.

\begin{figure}[h]
    \centering
    \begin{subfigure}[b]{0.35\linewidth}
        \centering
        \includegraphics[width=\linewidth]{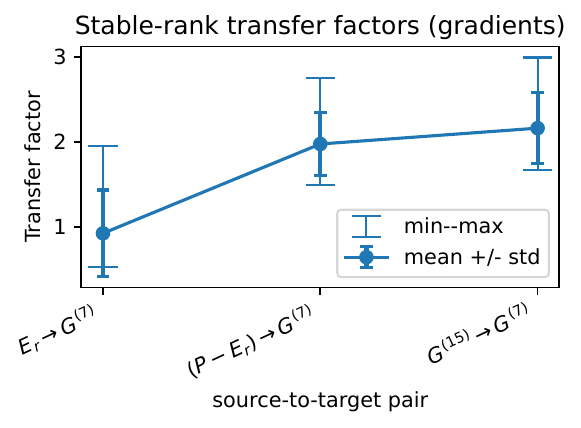}
        \label{fig:4_5_transfer_factors:transfer_raw}
    \end{subfigure}
    \begin{subfigure}[b]{0.37\linewidth}
        \centering
        \includegraphics[width=\linewidth]{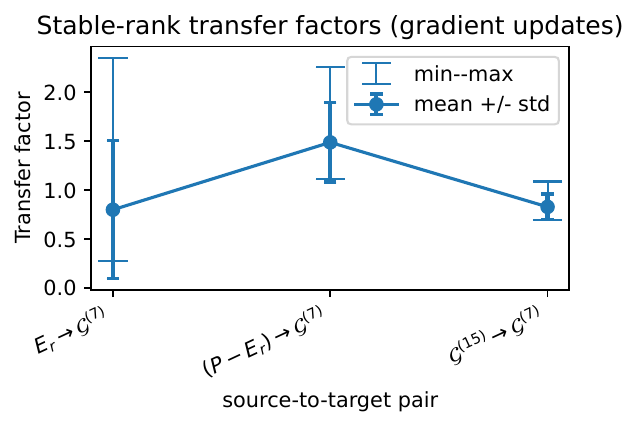}
        \label{fig:4_5_transfer_factors:transfer}
    \end{subfigure}
    \caption{
    Variation of stable rank transfer factor for raw gradients and gradient-induced activation updates across experimental settings.
    Left: raw gradient transfer factor.
    Right: gradient-induced activation deltas transfer factor.
    Raw gradients consistently amplify stable rank between layers 15 and 7, whereas the gradient update matrices undergo mild stable-rank contraction on average.
    }
    \label{fig:4_5_transfer_factors}
\end{figure}

Across the nine settings, raw gradient transfer factor is \(\tau_G^{15\to7}=2.161\pm0.418\), reported as mean \(\pm\) sample standard deviation, with range \([1.671,2.993]\).
Every evaluated setting therefore exhibits stable rank amplification from \(G^{(15)}\) to \(G^{(7)}\), by approximately a factor of two on average, although the amplification magnitude varies across subsets.
For gradient-induced activation updates, the corresponding factor is
\(\tau_{\mathcal G}^{15\to7}=0.829\pm0.132\), with range \([0.698,1.088]\) indicating mild stable-rank contraction on average.
The layer-\(15\)-to-layer-\(7\) update transfer is more stable across subsets than the full target-to-middle-layer ratios.
The complete in-sample shared-map rank sweeps and reduced-map spectra for \(G\) and \(\mathcal G\) are reported in \Cref{fig:4_5_shared_map_gradients,fig:4_5_shared_map_gradients_spectrum,fig:4_5_shared_map_updates,fig:4_5_shared_map_updates_spectrum}.

We then test whether higher stable rank of \(\Delta H\) corresponds to weaker single-vector ablation.
\Cref{fig:__stable_rank_vs_harm_delta_layer7} reports harmfulness scores before and after ablation, and plots the harmfulness delta against \(\operatorname{sr}(\Delta H)\).
Subsets with higher-rank \(\Delta H\) have smaller harmfulness deltas, suggesting that a less one-dimensional residual subspace is harder to suppress with one vector.
\Cref{fig:__stable_rank_vs_refusal_delta_layer7} shows, similarly, that the refusal delta slightly decreases as \(\operatorname{sr}(\Delta H)\) increases, indicating that higher-rank residuals \(\Delta H\) weaken the effect of single-vector refusal ablation.

\begin{figure}[h]
    \centering
    \begin{subfigure}[b]{0.3\linewidth}
        \centering
        \includegraphics[width=\linewidth]{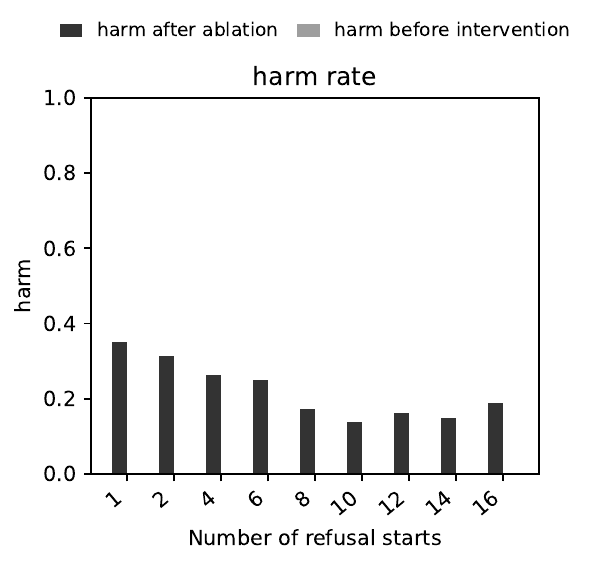}
        \label{fig:__stable_rank_vs_harm_delta_layer7:a}
    \end{subfigure}
    \begin{subfigure}[b]{0.3\linewidth}
        \centering
        \includegraphics[width=\linewidth]{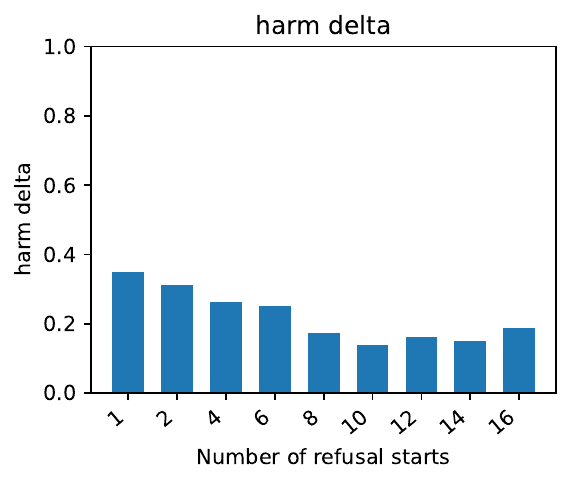}
        \label{fig:__stable_rank_vs_harm_delta_layer7:aa}
    \end{subfigure}
    \begin{subfigure}[b]{0.37\linewidth}
        \centering
        \includegraphics[width=\linewidth]{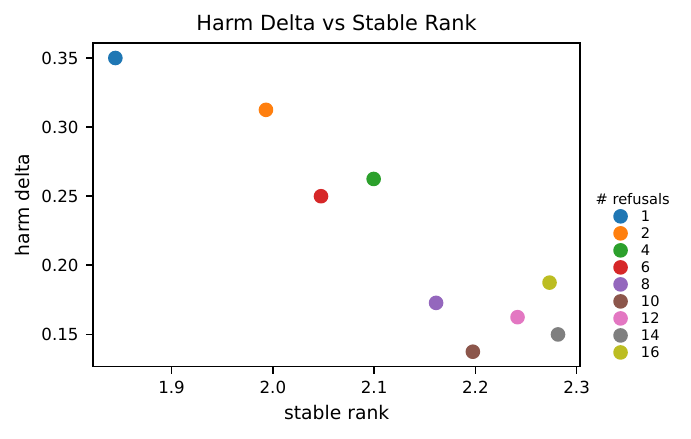}
        \label{fig:__stable_rank_vs_harm_delta_layer7:b}
    \end{subfigure}
    \caption{Harmfulness score decomposition and harmfulness delta versus \(\operatorname{sr}(\Delta H)\). Higher refusal residuals stable rank is associated with smaller harmfulness deltas under difference-in-means ablation.}
    \label{fig:__stable_rank_vs_harm_delta_layer7}
\end{figure}

\begin{figure}[h]
    \centering
    \begin{subfigure}[b]{0.3\linewidth}
        \centering
        \includegraphics[width=\linewidth]{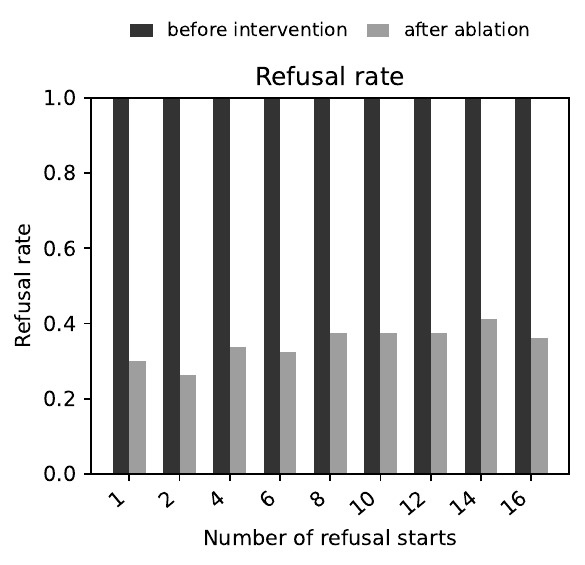}
        \label{fig:__stable_rank_vs_refusal_delta_layer7:a}
    \end{subfigure}
    \begin{subfigure}[b]{0.3\linewidth}
        \centering
        \includegraphics[width=\linewidth]{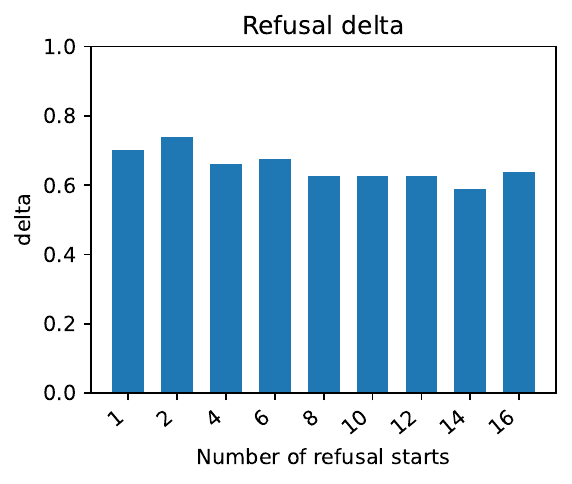}
        \label{fig:__stable_rank_vs_refusal_delta_layer7:aa}
    \end{subfigure}
    \begin{subfigure}[b]{0.37\linewidth}
        \centering
        \includegraphics[width=\linewidth]{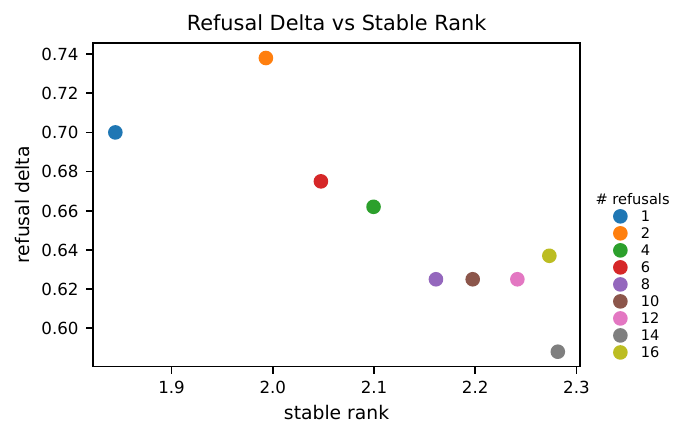}
        \label{fig:__stable_rank_vs_refusal_delta_layer7:b}
    \end{subfigure}
    \caption{Refusal score decomposition and refusal delta versus \(\operatorname{sr}(\Delta H)\). Higher activation stable rank is associated with smaller refusal deltas, consistent with weaker single-vector ablation.}
    \label{fig:__stable_rank_vs_refusal_delta_layer7}
\end{figure}

Overall, this fixed-model subset analysis supports the rank mechanism: increasing refusal first-token diversity raises the effective rank of both \(\Delta H\) and \(\mathcal G\), while higher activation stable rank is associated with smaller refusal and harmfulness deltas under single-vector ablation.

\subsection{Diverse-refusal fine-tuning raises refusal residuals rank and weakens ablation}
\label{subsec:chem_ft_rank}

We next test whether the rank--vulnerability pattern from \Cref{subsec:first_token_diversity_rank} also appears when simulating refusal training on the prompts are initially benign and not refused.
The primary prediction is that increasing refusal first-token diversity should raise refusal residuals stable rank and reduce the refusal delta after difference-in-means single vector ablation.
Attacking by ablating more principal directions is possible only if this does not degrade performance and the directions generalize across the prompts (we leave multiple-vector ablation attacks out of scope).
We keep the prompt set fixed and vary only the generated refusal-style completions by sampling 40 new-topic prompts from \textit{camel-ai/chemistry} and pairing them with refusal completions sampled from \(m\in\{1,4,8,12,16\}\) distinct first-token buckets with balanced counts within each dataset, which directly manipulates the stable rank of the refusal first-token matrix \(E_r\).
We fine-tune \texttt{OLMo-2-0425-1B-Instruct} separately on these datasets for each $m$ and select each time the earliest checkpoint that reaches or exceeds refusal rate threshold 0.6, so that later measurements are done under close base refusal rates.
For intervention, we select layer \(8\) as the earliest layer in the range of effective layers.

Additionally, for $m=8$ and $m=16$, we track refusal rates across optimization steps.
\Cref{fig:refusal_diversity_equal_step_trajectories} shows that the ablation effect is largest in a narrow optimization window.
This suggests that, although the number of optimization steps for each $m$ is different, fixing the resulting checkpoints at first-reached baseline rates of 0.6 compares largest attack affects.

\begin{figure}[h]
    \centering
    \begin{subfigure}[b]{0.31\linewidth}
        \centering
        \includegraphics[width=\linewidth]{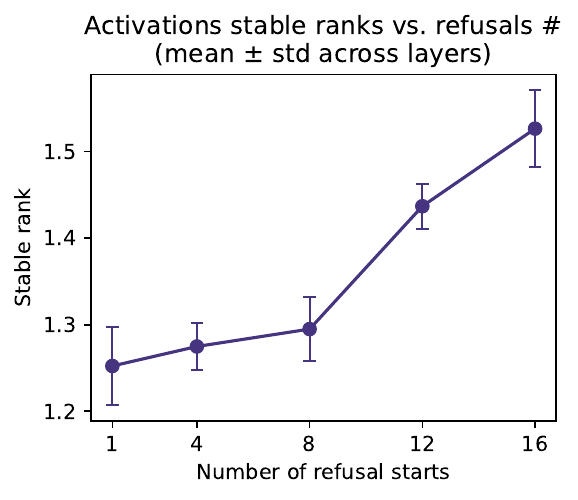}
        \label{fig:chem_stable_rank_by_diversity:aa}
    \end{subfigure}
    \begin{subfigure}[b]{0.34\linewidth}
        \centering
        \includegraphics[width=\linewidth]{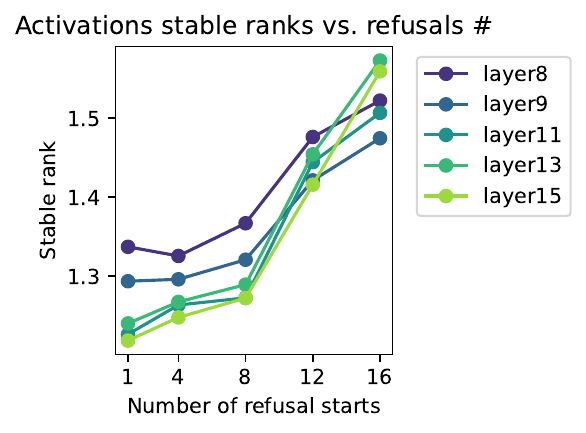}
        \label{fig:chem_stable_rank_by_diversity:a}
    \end{subfigure}
    \begin{subfigure}[b]{0.33\linewidth}
        \centering
        \includegraphics[width=\linewidth]{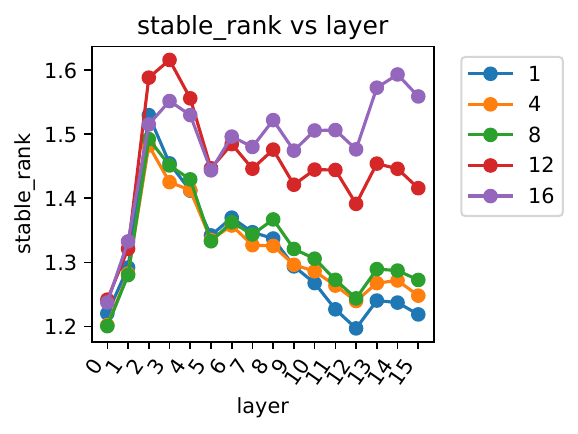}
        \label{fig:chem_stable_rank_by_diversity:b}
    \end{subfigure}
    \caption{Stable rank of refusal residuals under diverse-refusal fine-tuning. Left plot averages ranks across layers 8-15. Stable ranks across layers and their per-layer averages increase with the number of refusal first-token buckets with the clearest separation in middle-to-late layers.}
    \label{fig:chem_stable_rank_by_diversity}
\end{figure}

At the matched checkpoints, we first measure the stable rank of the resulting residuals $\Delta H$.
\Cref{fig:chem_stable_rank_by_diversity} shows that stable rank of $\Delta H$ increases with the number of refusal first-token buckets with the largest effect across mid-to-late layers.
This supports the rank mechanism: more diverse refusal starts induce less concentrated activation changes.

\begin{figure}[h]
    \centering
    \begin{subfigure}[b]{0.29\linewidth}
        \centering
        \includegraphics[width=\linewidth]{post_rebuttal_plots/4.6_batch1_onlyrefharm/____refusal_decomp___layer15.pdf}
        \label{fig:chem_refusal_delta_vs_rank:b}
    \end{subfigure}
    \begin{subfigure}[b]{0.30\linewidth}
        \centering
        \includegraphics[width=\linewidth]{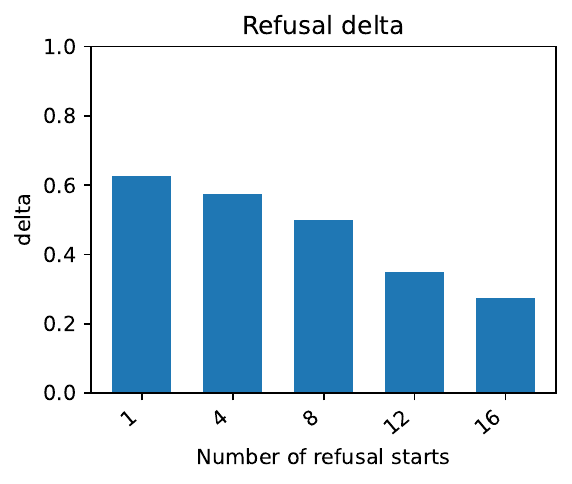}
        \label{fig:chem_refusal_delta_vs_rank:bb}
    \end{subfigure}
    \begin{subfigure}[b]{0.38\linewidth}
        \centering
        \includegraphics[width=\linewidth]{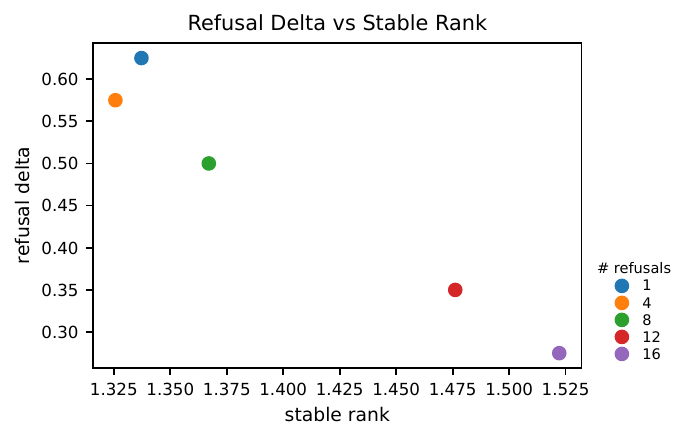}
        \label{fig:chem_refusal_delta_vs_rank:c}
    \end{subfigure}
    \caption{Refusal ablation at checkpoints matched by baseline refusal score. Left and middle plots: diverse refusals yield smaller deltas under difference-in-means ablation. Right plot: diverse refusal starts are associated with higher refusal residuals stable rank and smaller refusal deltas}
    \label{fig:chem_refusal_delta_vs_rank}
\end{figure}

We then evaluate whether this rank increase weakens single-vector ablation.
\Cref{fig:chem_refusal_delta_vs_rank} shows refusal scores before and after ablation at the matched checkpoints as well as stable rank of $\Delta H$ plotted against refusal deltas.
\Cref{fig:4_6_transfer_factors} shows that, similar to the previous sections, with diverse fine-tuning, stable rank of target quantities as well as gradient-induced update deltas increase on average, undergoing mild stable rank expansion from layer 15 to layer 7.
Finally, rightmost \Cref{fig:chem_refusal_delta_vs_rank} directly relates geometry to attackability.
More diverse refusal starts are associated with higher refusal residuals stable rank and smaller refusal deltas under ablation.
This suggests that increasing refusal first-token diversity weakens the single dominant refusal vector by spreading the fine-tuning update over more effective directions.

Overall, this controlled fine-tuning experiment suggests the rank--vulnerability pattern can hold broadly.
Even on non-harmful chemistry prompts, concentrating refusal first tokens produces lower-rank refusal residuals and stronger single-vector ablation, while diversifying refusal starts raises the refusal residuals stable ranks and reduces the ablation effect at matched baseline refusal score.

\section{Conclusion}

As AI systems enter dual-use domains, robustness failures can be catastrophic and unevenly impact at-risk communities.
Our work takes a step toward understanding AI vulnerabilities at the root cause and designing defenses from first principles.

We asked why refusal behavior can concentrate in a low-dimensional activation-space mechanism that is shared across many harmful categories and can be effectively suppressed.
In OLMo-2-0425-1B-Instruct, we find that the refusal directions reflect refusal safety-training: cross-entropy gradients from the first tokens of refusal completions induce activation updates whose mean and principal directions align with refusal direction and refusal subspace.
This connects the empirical difference-in-means refusal vector to the training updates that create it.
This mechanism suggests a simple source of brittleness exist: concentrated refusal first tokens induce low-stable-rank activation changes, making refusal behavior easier to suppress with a single-vector refusal ablation.
Across fixed-model and controlled fine-tuning experiments, increasing refusal first-token diversity raises stable rank of the activation-change matrix and is associated with smaller refusal deltas.
At the same time, refusal-completion diversity alone is unlikely a complete defense.

Overall, our results suggest that refusal robustness to ablation is partly shaped by the geometry of refusal training updates.
Token-level properties of the training completions, especially refusal-prefix concentration and first-token frequency, can shape the dimensionality of the learned refusal mechanism.
Understanding this link between data, gradients, and activation geometry may be a step toward principled mechanistic understanding of defenses against jailbreaks.
Finally, our controlled fine-tuning experiments on non-harmful datasets motivate study whether similar mechanisms hold for a broader set of features than for refusal or safety-related features.

\clearpage

\printbibliography

\appendix

\clearpage

\section{Related work}
\label{sec:related}

\subsection{Activation steering and ablation attacks}
Activation steering and ablation attacks are a family of whitebox methods which directly act on model activations and runtime to alter its output in a desired way.
Those methods are widely used to suppress refusals, bypass safety alignment, and to jailbreak open models.
An early work \parencite{turner2024steeringlanguagemodelsactivation} introduces activation engineering, an inference-time steering of model activation by adding a vector, computed from contrasting datasets.
\parencite{arditi2024refusallanguagemodelsmediated} introduced a difference-of-means jailbreak attack: the method works by estimating refusal vector as a difference between harmful and harmless prompts' activation means and later removing projections on this vector to let model respond to prompts it was trained to refuse.
\parencite{yu2025robustllmsafeguardingrefusal} additionally confirms that the operation of refusal feature ablation approximates the worst-case perturbation of offsetting model safety.
\parencite{dunefsky2025oneshotoptimizedsteeringvectors} find a steering vector by performing gradient descent from a single training example to mediate safety-relevant behavior.
\parencite{beaglehole2025universalsteeringmonitoringai} offers a universal steering by adding eigenvectors extracted from Recursive Feature Machines (RFMs) trained on contrasting datasets.

A line of work introduces sparse autoencoder (SAE) based steering. O’Brien et al. identify and amplify SAE features mediating refusal \parencite{obrien2025steeringlanguagemodelrefusal}; Bayat et al. introduce sparse activation steering by selecting SAE features from contrastive prompts \parencite{bayat2025steeringlargelanguagemodel}; SAIF steers instruction-following behavior through instruction-relevant SAE latents \parencite{he2025saifsparseautoencoderframework}.

Activation steering has been useful more broadly in non-security contexts, too: ActAdd and CAA use prompt-pair or contrastive activation differences to steer behaviors, such as topic, sentiment, factuality, or behavior \parencite{turner2024steeringlanguagemodelsactivation}, \parencite{panickssery2024steeringllama2contrastive}; PPLM steers generation by using gradients from an attribute model to act on activations \parencite{dathathri2020plugplaylanguagemodels}; Subramani et al. extract steering vectors from frozen models with gradient descent \parencite{subramani2022extractinglatentsteeringvectors}; Function Vectors and instruction-steering work extract task or instruction vectors for inference-time control \parencite{todd2024functionvectorslargelanguage}, \parencite{stolfo2025improvinginstructionfollowinglanguagemodels}.

Our work complements steering attack approaches by explaining a notable safety-critical feature in activations, the refusal vector, and its low-dimensional structure using analytical methods.

\subsection{Mechanistic interpretability, representation geometry, and safety-training dynamics}

An adjacent body of work studies the representations learned by models and the training dynamics that produce them.
\parencite{wollschläger2025geometryrefusallargelanguage} show that refusal representations span multiple orthogonal directions and can be described as a concept cone, motivating representational independence: interventions on different directions can mediate different behaviors without necessarily affecting others.
Similarly, \parencite{pan2025hiddendimensionsllmalignment} argue that refusal is multi-dimensional: a dominant direction mediates refusal, while additional components correspond to distinct interpretable safety features.
\parencite{sharkey2025openproblems} survey open problems and research areas in mechanistic interpretability, including predicting which capabilities and mechanisms arise during training or fine-tuning.
Complementing these empirical and conceptual accounts of multi-dimensional refusal, our work analytically studies the origin of such orthogonal refusal directions and links them to training-time refusal gradients.
We further provide evidence that concentrated refusal datasets can produce lower-rank gradient-induced activation updates, yielding low-stable-rank activation changes and nearly one-dimensional refusal representations; this structure, in turn, predicts larger effectiveness of the ablation attack.

A broader body of work studies concentrations as well as rank and dimensionality collapse.
\parencite{minder2025narrowfinetuningleavesclearly} demonstrate that fine-tuning on a narrow domain leaves identifiable traces of the training objective in model activations, suggesting that semantically narrow fine-tuning can induce readable activation-space biases.
\parencite{qi2024safetyalignmentjusttokens} find that the largest safety fine-tuning updates fall on the first assistant tokens, and that constraining updates to be distributed more evenly across output tokens improves robustness to fine-tuning-based jailbreaks.
\parencite{davis2026spectralgradientupdateshelp} identify conditions under which spectral optimization is preferred to Euclidean gradient descent and show that transformer activations can remain low-stable-rank during training.
\parencite{dong2021attentionnot} prove that pure self-attention can converge toward rank-one token-uniform representations without skip connections or MLPs.
\parencite{papyan2020prevalence} describe terminal-training neural collapse, in which final-layer class features collapse to class means with highly symmetric geometry, while \parencite{rangamani2023feature} extend this picture to intermediate layers, showing that deeper representations increasingly reduce within-class variance relative to between-class variance and align class-mean subspaces with dominant weight directions.
Unlike these broader investigations of rank collapse and neural collapse, our work focuses on safety post-training: we study a mechanism in which concentration of refusal first tokens makes the target matrix low stable rank, inducing low-stable-rank activations changes whose principal directions approximate the refusal subspace.


\section{Limitations and assumptions of analytical derivations}
\label{appendix_limitations}
Throughout the derivations, we make simplifying assumptions to obtain
a tractable model of the relationship between refusal targets,
gradients, and activation changes.

\paragraph{Supervised refusal-training scope.}
Our derivation models supervised fine-tuning on harmful prompts paired
with textual refusals.
We use the terms refusal training and supervised safety tuning for
this setting.
DPO, RLHF, and RLVR use different objectives and are not directly covered by the derivation which may induce slightly 
The controlled fine-tuning experiment uses effective batch size one to
isolate per-example refusal updates.
It is intended as an approximation to the sporadic
contribution of comparatively rare harmful--refusal examples within SFT dataset mixture for our target model.
Larger batches containing unrelated instruction-following examples may introduce additional signal into gradients and the resulting activation changes.

\paragraph{First-token-only loss.}
We compute all losses only at the first assistant target-token position,
after chat formatting, with the prompt positions masked.
We therefore omit gradients from the remaining completion tokens.
In the evaluated settings, the first-token loss captures substantial
refusal-direction structure, but later target positions may introduce
additional directions and training effects.

\paragraph{Deterministic per-example updates.}
We compute isolated per-example gradient steps with deterministic
backward passes and without stochastic regularization such as dropout.
These measurements do not reproduce minibatch interactions, optimizer
momentum, adaptive preconditioning, weight decay, or the accumulation
of many training steps.

\paragraph{Prompt-dependent Jacobians and conditioning.}
Exact transformer Jacobians are prompt-dependent, including through
input-dependent attention patterns.
Consequently, \Cref{thm:sr_well_conditioned} is an idealized sufficient
condition rather than a claim that complete per-example Jacobians are
shared.
Products of worst-case condition-number bounds may also become
numerically vacuous for anisotropic maps.
The reduced-map condition numbers reported in our experiments apply
only to selected data-dependent source coordinates and are not
estimates of the full prompt-specific Jacobian condition numbers.

\paragraph{In-sample shared-map fit and predictive scope.}
When \(\epsilon_{\max}\) is obtained by measuring the transfer-factor
fit error on the same \(N\) examples used to estimate
\(\widehat J_k^{L\to l}\),
Lemma~\ref{lem:stable-rank-transfer-fit-bound} gives an in-sample
fit-conditioned interval for \(\operatorname{sr}(G^{(l)})\), rather
than an independent prediction.
Specifically, it certifies that, on the fitted support, the
map-implied transfer factor \(\widehat\tau_k^{L\to l}\) agrees with the
observed transfer factor \(\tau^{L\to l}\) to relative error at most
\(\epsilon_{\max}\), and therefore that the observed target stable rank
lies within the corresponding interval.
The result becomes predictive only when the same fitted map is held
fixed and applied without refitting to additional source gradient
matrices, with its error bound established independently on held-out
source--target matrix pairs.
Moreover, agreement at the level of stable rank does not imply
matrix-level agreement; we evaluate the latter separately using
relative Frobenius error, relative spectral error, and row-wise cosine
similarity.

\section{Additional details about experimental setup.}

\paragraph{Refusal-mediating layers.}
Unless otherwise stated, the frozen-model single-layer
rank--vulnerability and ablation analyses use layer \(7\) since our measurements show that refusal vector estimation at that layer provides most effective, and therefore attacker-preferred refusal ablation.
Layerwise profiles and cross-layer transfer experiments additionally
report all explicitly labeled layers.
The controlled fine-tuning rank--vulnerability experiment in
\Cref{subsec:chem_ft_rank} uses layer \(8\) -- the earliest layer in the range of effective layers in our experiment, which retains a long last-to-middle-layer path over which to test rank propagation, similar to frozen-model experiments; selecting a later layer would make that propagation test less
stringent.

\paragraph{Estimating gradient-induced activation updates.}
To compute gradient-induced activation updates \(\mathcal G\), for
each training example \(i\) and cross-entropy loss
\(\ell_i^{\mathrm{CE}}\) on the first token of the target completion,
we apply one small parameter update
\[
\theta_{\mathrm{new}}
=
\theta-\eta\nabla_\theta\ell_i^{\mathrm{CE}},
\qquad
\eta=10^{-6}.
\]
At layer \(l\), we measure
\[
\mathfrak g_i^{(l)}
:=
-\left(
h_i^{(l)}(\theta_{\mathrm{new}})
-
h_i^{(l)}(\theta)
\right),
\]
and then restore the parameters to \(\theta\).
Thus, \(\mathfrak g_i^{(l)}\) is the negative realized activation
change from one small gradient step, consistent with our sign
convention.

For the evaluated models, we use \(l=7\) or \(l=8\), the earliest
layers at which the estimated refusal directions yield substantial
attack success; this retains a long last-to-middle-layer path over
which to test rank propagation.
All empirical \(\mathcal G\) matrices are computed from the realized
finite differences in
\Cref{eq:gradient_induced_update_vector}.

Under a local first-order expansion, their corresponding differential
is
\[
\eta J_i^{(l)}\nabla_\theta\ell_i^{\mathrm{CE}},
\]
which can instead be evaluated directly as a
Jacobian--vector product without applying a parameter
perturbation.
We leave a systematic comparison of the difference and
differential estimators to future work.

\section{Broader limitations}

\paragraph{Model, data, language, and checkpoint scope.}
This work is primarily a case study of one small OLMo model family,
relatively small prompt sets, and English-language data.
We do not identify an attackable difference-in-means refusal direction
in the base checkpoint under our estimator, but this does not rule out
other functional base-model refusal mechanisms or representations.
The released Base, SFT, DPO, RLVR1, and Instruct checkpoints differ in
their objectives, data, optimization histories, and numbers of updates.
Their comparison is therefore descriptive and does not identify which
training stage caused the observed refusal geometry or change in
ablation sensitivity.

\paragraph{Target-set-adapted white-box threat model.}
For each evaluated prompt set \(S\), the attacker estimates one difference-in-means direction from all prompts in \(S\) and uses that direction for the refusal ablation attack on \(S\).
The results therefore intentionally test an attacker-favorable same-set ablation setting and do not establish that a direction estimated from one prompt sample transfers to unseen prompts.
Held-out universality of refusal directions is a distinct question left for future work.

\paragraph{Attack-family scope.}
Our primary attack is difference-in-means single-vector ablation.
SAE-based steering, multi-feature ablation, and broader automated
representation-level attacks remain outside our evaluation.
We approach the difference-in-means single-vector attack using analytical methods and
invite application of those methods to study a broader set of attacks and defenses in mechanistic interpretability.

\paragraph{Stable rank versus causal mediation and intrinsic dimension.}
Stable rank measures global linear spectral concentration.
It does not by itself measure refusal-signal magnitude, causal
mediation strength, or local nonlinear manifold dimension.
A layer may therefore have relatively high stable rank while its
refusal direction has little behavioral effect, whereas another layer
may have lower stable rank but mediate refusal strongly.
Our rank--vulnerability comparisons concern fixed or matched
refusal-mediating layers and do not use stable rank to predict which
layer is most causally important.

\paragraph{Scope beyond refusal.}
We study refusal because it provides a shared refuse/comply mechanism
with a well-defined white-box ablation attack.
It remains unclear whether similarly universal non-refusal concepts
have the same geometry, or whether increasing feature rank generally
makes universal steering or other forms of jailbreak optimization
harder.

\section{Theorems}

\subsection{Stable rank of the refusal first-token matrix}
\label{app:Er_stable_rank}

\begin{definition}[One-hot matrix \(E_r\) for a batch of refusals]
Let \(E_r \in \RR^{N\times |V|}\) be the row-stacked one-hot matrix with \(i\)-th row \(e_{r_i}^\top\), for a batch of targets \(r=(r_1,\ldots,r_N)\).
Let \(c_k:=\#\{i:r_i=k\}\) be the count of token \(k\), let \(f_k:=c_k/N\), and let \(m:=|\{k:c_k>0\}|\).
\end{definition}

\begin{lemma}[Spectrum of a stacked one-hot matrix]
\label[lemma]{lem:Er_spectrum_appendix}
The feature-space Gram matrix of \(E_r\) is diagonal:
\begin{align}
E_r^\top E_r
=
\operatorname{diag}(c_1,\ldots,c_{|V|}).
\end{align}
Consequently, the nonzero singular values of \(E_r\) are \(\sqrt{c_k}\) for tokens with \(c_k>0\), and
\begin{align}
\rank(E_r)=m.
\end{align}
The stable rank is
\begin{align}
\sr(E_r)
=
\frac{\|E_r\|_F^2}{\|E_r\|_2^2}
=
\frac{\sum_{k=1}^{|V|}c_k}{\max_k c_k}
=
\frac{N}{\max_k c_k}
=
\frac{1}{\max_k f_k}.
\end{align}
\end{lemma}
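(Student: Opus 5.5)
The plan is to compute the Gram matrix $E_r^\top E_r$ entrywise, read the singular values off its eigenvalues, and then evaluate the two norms in the definition of $\sr$. Write $E_r=\sum_{i=1}^N e_i e_{r_i}^\top$, where $e_i\in\RR^N$ is the $i$-th standard basis vector and $e_{r_i}\in\RR^{|V|}$ is the one-hot target. Then
\[
E_r^\top E_r=\sum_{i,j} e_{r_i} e_i^\top e_j e_{r_j}^\top=\sum_{i=1}^N e_{r_i}e_{r_i}^\top .
\]
Each term is the diagonal matrix with a single $1$ at position $(r_i,r_i)$. Grouping the terms by token gives $\operatorname{diag}(c_1,\ldots,c_{|V|})$. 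Equivalently, the $(k,k')$ entry of $E_r^\top E_r$ counts the rows having a $1$ in both columns $k$ and $k'$; this is zero when $k\neq k'$, since each row has exactly one nonzero, and equals $c_k$ when $k=k'$.

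Since $E_r^\top E_r$ is diagonal and positive semidefinite, its eigenvalues are exactly the counts $c_k$. The singular values of $E_r$ are their square roots, so the nonzero singular values are $\sqrt{c_k}$ for the tokens with $c_k>0$. The rank of $E_r$ is therefore the number of such tokens, namely $m$.

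For the stable rank, $\|E_r\|_F^2=\operatorname{tr}(E_r^\top E_r)=\sum_k c_k=N$. This also follows directly, since $E_r$ has exactly $N$ entries equal to $1$. For the spectral norm, $\|E_r\|_2^2=\lambda_{\max}(E_r^\top E_r)=\max_k c_k$. Dividing the two gives $N/\max_k c_k$. Dividing numerator and denominator by $N$ then gives $1/\max_k f_k$. Note that $\max_k c_k\ge 1$ whenever $N\ge 1$, so the ratio is well defined.

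There is no real obstacle here; the argument is elementary. The only point needing care is the diagonality of the Gram matrix, which rests entirely on each row being one-hot. As a side remark, the balanced-count corollary follows by minimizing $\max_k c_k$ subject to $\sum_k c_k=N$ over $m$ buckets. The minimum value of $\max_k c_k$ is $\lceil N/m\rceil$, by pigeonhole, and it is attained by balanced counts.
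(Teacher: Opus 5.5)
Your proof is correct and follows the paper's argument essentially step for step. The paper likewise computes $(E_r^\top E_r)_{k,k'}=\sum_i \mathbf{1}[r_i=k]\,\mathbf{1}[r_i=k']$ entrywise, reads off the nonzero singular values $\sqrt{c_k}$ and the rank $m$, and then uses $\|E_r\|_F^2=N$ and $\|E_r\|_2^2=\max_k c_k$; your outer-product form $\sum_i e_{r_i}e_{r_i}^\top$ and the well-definedness remark are harmless additions.
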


\begin{proof}
The proof uses an approach similar to \parencite{davis2026spectralgradientupdateshelp}.
For any \(k,k'\in V\),
\begin{align}
(E_r^\top E_r)_{k,k'}
=
\sum_{i=1}^N (E_r)_{i,k}(E_r)_{i,k'}
=
\sum_{i=1}^N \mathbf{1}[r_i=k]\mathbf{1}[r_i=k'].
\end{align}
If \(k\neq k'\), no example can satisfy both \(r_i=k\) and \(r_i=k'\), so the sum is \(0\).
If \(k=k'\), the sum is \(c_k\).
Thus \(E_r^\top E_r=\operatorname{diag}(c_1,\ldots,c_{|V|})\).
The eigenvalues of \(E_r^\top E_r\) are \(c_k\), so the singular values of \(E_r\) are \(\sqrt{c_k}\).
The rank is the number of positive counts.
Finally, \(\|E_r\|_F^2=\sum_k c_k=N\), while \(\|E_r\|_2^2=\max_k c_k\), giving the stable-rank formula.
\end{proof}

\begin{corollary}[Minimizer and maximizer of \(\sr(E_r)\)]
\label{cor:Er_stable_rank_extrema_appendix}
Let \(c_{\max}:=\max_k c_k\).
Then
\begin{align}
1
\le
\sr(E_r)
=
\frac{N}{c_{\max}}
\le
m.
\end{align}
Moreover:
\begin{enumerate}
\item \(\sr(E_r)=1\) iff \(c_{\max}=N\), i.e., all targets share the same first token.
\item For fixed support size \(m\), \(\sr(E_r)\) is maximized when counts are as balanced as possible across the \(m\) observed tokens, i.e. \(c_k\in\{\lfloor N/m\rfloor,\lceil N/m\rceil\}\) on the support.
In that case,
\begin{align}
\max \sr(E_r)
=
\frac{N}{\lceil N/m\rceil}.
\end{align}
If \(m\mid N\), this maximum equals \(m\).
\item If the support is unconstrained and \(N\le |V|\), the global maximum is \(\sr(E_r)=N\), achieved when all first tokens are distinct.
If \(N>|V|\), the maximum is achieved by balancing counts over all \(|V|\) vocabulary tokens.
\end{enumerate}
\end{corollary}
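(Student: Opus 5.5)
The corollary follows from \Cref{lem:Er_spectrum_appendix}, which already gives \(\sr(E_r)=N/c_{\max}\). Every claim therefore becomes an elementary statement about the integer count vector \((c_k)\). The constraints on it are \(c_k\ge 0\), \(\sum_k c_k=N\), and exactly \(m\) positive entries. The plan is to bound \(c_{\max}\) from above and below under these constraints and then read off the extrema of \(N/c_{\max}\).

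For the two-sided bound, I would note that \(c_{\max}\le \sum_k c_k=N\), which gives \(\sr(E_r)\ge 1\). For the other side, the \(m\) positive counts sum to \(N\), so their maximum is at least their average: \(c_{\max}\ge N/m\), hence \(\sr(E_r)\le m\). Item 1 is the equality case of the first inequality. If \(c_{\max}=N\), all other counts vanish, so a single token carries every example. Conversely, if all \(r_i\) are equal, then \(c_{\max}=N\).

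For item 2, I would sharpen the averaging bound using integrality. Because \(c_{\max}\) is an integer with \(c_{\max}\ge N/m\), we get \(c_{\max}\ge\lceil N/m\rceil\), and so \(\sr(E_r)\le N/\lceil N/m\rceil\). I would then show the bound is attained by an explicit construction. Write \(N=qm+s\) with \(0\le s<m\), and assign \(q+1\) to \(s\) buckets and \(q\) to the remaining \(m-s\) buckets. Support size \(m\) requires \(q\ge 1\), i.e. \(N\ge m\), which holds implicitly since \(m\le N\). The largest count in this construction is \(\lceil N/m\rceil\).

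This gives the maximizing value but not yet the ``iff balanced'' phrasing. For that, I would argue that any configuration with \(c_{\max}=\lceil N/m\rceil\) is optimal, and that balanced counts are one such configuration. This is the one subtle point, since some other unbalanced configurations can share the same maximum. I would state the claim as ``balanced counts achieve the maximum'' rather than claim uniqueness. When \(m\mid N\), the ceiling is exact and the maximum equals \(m\).

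Item 3 follows from item 2 by optimizing over \(m\). The function \(N/\lceil N/m\rceil\) is nondecreasing in \(m\), because \(\lceil N/m\rceil\) is nonincreasing in \(m\). So the best choice is the largest feasible support, \(m=\min(N,|V|)\). If \(N\le|V|\), taking \(m=N\) gives all counts equal to \(1\) and \(\sr(E_r)=N\), which is also the trivial ceiling since the rank is at most \(N\). If \(N>|V|\), taking \(m=|V|\) means balancing counts over the whole vocabulary, with value \(N/\lceil N/|V|\rceil\).

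The main obstacle is only bookkeeping: handling the integrality and ceiling correctly, and stating the maximizer characterization without overclaiming uniqueness. No step requires more than the spectral formula of \Cref{lem:Er_spectrum_appendix}.
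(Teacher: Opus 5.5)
Your proposal is correct and follows essentially the same route as the paper: reduce to \(\sr(E_r)=N/c_{\max}\) via \Cref{lem:Er_spectrum_appendix}, use \(c_{\max}\le N\) for the lower bound and item 1, use the pigeonhole/integrality bound \(c_{\max}\ge\lceil N/m\rceil\) for item 2, and take \(m=\min\{N,|V|\}\) for item 3. Your explicit balanced construction and the monotonicity of \(N/\lceil N/m\rceil\) in \(m\) fill in details the paper leaves implicit. Your remark that balanced counts attain the maximum but need not be the unique maximizer is also right, e.g.\ counts \((3,3,1)\) for \(N=7\), \(m=3\).
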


\begin{proof}
By \Cref{lem:Er_spectrum_appendix}, \(\sr(E_r)=N/c_{\max}\).
The lower bound follows from \(c_{\max}\le N\), with equality iff \(c_{\max}=N\).
For fixed support size \(m\), the pigeonhole principle gives \(c_{\max}\ge \lceil N/m\rceil\), hence
\begin{align}
\sr(E_r)
=
\frac{N}{c_{\max}}
\le
\frac{N}{\lceil N/m\rceil}
\le
m.
\end{align}
This is tight when counts are balanced across the \(m\) observed targets.
The unconstrained-support statement follows by taking \(m=\min\{N,|V|\}\).
\end{proof}

\subsection{Stable rank propagation theorems}
\label{app:sr_conditioned_factor}

The following proof uses standard matrix norm inequalities: the Frobenius/Schatten-\(2\) ideal property, operator norm submultiplicativity, and the definition of the spectral condition number; see, e.g., \parencite[Chapter~5]{horn2012matrixanalysis}.

\begin{theorem}[Stable rank under multiplication by a well-conditioned factor]
\label{thm:sr_well_conditioned_appendix}
Let \(A\in\RR^{N\times n}\), let \(B\in\RR^{n\times n}\) be invertible, and define condition number:
\begin{align}
\kappa(B):=\|B\|_2\|B^{-1}\|_2.
\end{align}
Then
\begin{align}
\frac{1}{\kappa(B)^2}\sr(A)
\le
\sr(AB)
\le
\kappa(B)^2\sr(A).
\end{align}
\end{theorem}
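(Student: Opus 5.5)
The plan is to bound the numerator and denominator of \(\sr(AB)=\|AB\|_F^2/\|AB\|_2^2\) separately, each in both directions, using only how Frobenius and operator norms behave under multiplication by \(B\) and \(B^{-1}\). If \(A=0\) the statement is degenerate, since stable rank is undefined. So assume \(A\neq 0\); then \(AB\neq 0\) because \(B\) is invertible.

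\textbf{Step 1 (numerator).} By the ideal property \(\|XY\|_F\le\|X\|_F\|Y\|_2\) we get \(\|AB\|_F\le\|A\|_F\|B\|_2\). Writing \(A=(AB)B^{-1}\) gives \(\|A\|_F\le\|AB\|_F\|B^{-1}\|_2\). Combining,
\begin{align}
\frac{\|A\|_F^2}{\|B^{-1}\|_2^2}\le\|AB\|_F^2\le\|A\|_F^2\|B\|_2^2 .
\end{align}

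\textbf{Step 2 (denominator).} Submultiplicativity of the spectral norm gives \(\|AB\|_2\le\|A\|_2\|B\|_2\). Applying it to \(A=(AB)B^{-1}\) gives \(\|A\|_2\le\|AB\|_2\|B^{-1}\|_2\). Hence
\begin{align}
\frac{\|A\|_2^2}{\|B^{-1}\|_2^2}\le\|AB\|_2^2\le\|A\|_2^2\|B\|_2^2 .
\end{align}

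\textbf{Step 3 (combine).} For the upper bound on \(\sr(AB)\), take the upper bound on the numerator and the lower bound on the denominator:
\begin{align}
\sr(AB)\le\frac{\|A\|_F^2\|B\|_2^2}{\|A\|_2^2/\|B^{-1}\|_2^2}=\kappa(B)^2\sr(A).
\end{align}
For the lower bound, take the lower bound on the numerator and the upper bound on the denominator:
\begin{align}
\sr(AB)\ge\frac{\|A\|_F^2/\|B^{-1}\|_2^2}{\|A\|_2^2\|B\|_2^2}=\kappa(B)^{-2}\sr(A).
\end{align}

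No step is a real obstacle. The only point needing care is the one-sided Frobenius inequality \(\|XY\|_F\le\|X\|_F\|Y\|_2\), as opposed to the weaker bound \(\|X\|_F\|Y\|_F\). To justify it, apply the row-wise bound \(\|x^\top Y\|_2\le\|x\|_2\|Y\|_2\) to each row \(x^\top\) of \(X\) and sum the squares. This is the right-multiplication case of the Schatten ideal property cited from Horn and Johnson.
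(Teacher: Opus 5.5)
Your proposal is correct and follows essentially the paper's proof: the upper bound is identical (the Frobenius ideal inequality in the numerator, and \(\|AB\|_2\ge\|A\|_2/\|B^{-1}\|_2\) from \(A=(AB)B^{-1}\) in the denominator). For the lower bound, the paper applies the upper bound to \((AB)B^{-1}\) and uses \(\kappa(B^{-1})=\kappa(B)\), which amounts to exactly the two reverse inequalities you write out directly in Steps 1 and 2.
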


\begin{proof}
Recall that
\begin{align}
\sr(M):=\frac{\|M\|_F^2}{\|M\|_2^2}.
\end{align}
We use
\begin{align}
\|AB\|_F &\le \|A\|_F\|B\|_2, \\
\|AB\|_2 &\le \|A\|_2\|B\|_2.
\end{align}
Since \(A=(AB)B^{-1}\),
\begin{align}
\|A\|_2
=
\|(AB)B^{-1}\|_2
\le
\|AB\|_2\|B^{-1}\|_2,
\end{align}
so
\begin{align}
\|AB\|_2
\ge
\frac{\|A\|_2}{\|B^{-1}\|_2}.
\end{align}
Therefore,
\begin{align}
\sr(AB)
=
\frac{\|AB\|_F^2}{\|AB\|_2^2}
\le
\frac{\left(\|A\|_F\|B\|_2\right)^2}{\left(\|A\|_2/\|B^{-1}\|_2\right)^2}
=
\kappa(B)^2\sr(A).
\end{align}
For the lower bound, apply the upper bound to \(A=(AB)B^{-1}\):
\begin{align}
\sr(A)
\le
\kappa(B^{-1})^2\sr(AB)
=
\kappa(B)^2\sr(AB).
\end{align}
Rearranging gives
\begin{align}
\sr(AB)
\ge
\frac{1}{\kappa(B)^2}\sr(A).
\end{align}
\end{proof}

\begin{lemma}[Stable rank bounds from transfer factor fit error]
\label{lem:stable-rank-transfer-fit-bound-proof}

Let \(G^{(L)},G^{(l)}\in\mathbb{R}^{N\times d}\) be nonzero gradient
matrices, and let \(\widehat{J}_k^{L\to l}\) be a top-$k$ source-subspace
shared map with fitted output
\[
\widehat{G}_k^{(l)}
=
G^{(L)}\widehat{J}_k^{L\to l}.
\]
Suppose that the relative transfer-factor fit error is bounded by some $\epsilon_{max}$:
\[
\epsilon_{\tau,k}^{L\to l} \leq \epsilon_{max} <1.
\]
Then
\[
\frac{
    \widehat{\tau}_k^{L\to l}
}{
    1+\epsilon_{max}
}
\operatorname{sr}\!\left(G^{(L)}\right)
\leq
\operatorname{sr}\!\left(G^{(l)}\right)
\leq
\frac{
    \widehat{\tau}_k^{L\to l}
}{
    1-\epsilon_{max}
}
\operatorname{sr}\!\left(G^{(L)}\right).
\]
\end{lemma}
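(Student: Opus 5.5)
The argument is pure algebra on the definitions; no matrix inequality beyond positivity of stable ranks is needed. Write \(s_L:=\operatorname{sr}(G^{(L)})\), which is positive because \(G^{(L)}\neq 0\). Write \(\tau:=\tau^{L\to l}=\operatorname{sr}(G^{(l)})/s_L\), which is positive because \(G^{(l)}\neq 0\). Write \(\widehat\tau:=\widehat\tau_k^{L\to l}\). The definition of \(\widehat\tau\) presupposes \(\widehat G_k^{(l)}\neq 0\), so \(\widehat\tau>0\) as well. The goal is to turn the hypothesis \(\epsilon_{\tau,k}^{L\to l}\le\epsilon_{\max}\) into two-sided bounds on \(\tau\) in terms of \(\widehat\tau\). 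We then multiply by \(s_L\).

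\emph{Step 1: unpack the hypothesis.} Since \(\tau>0\), the definition of the relative transfer-factor fit error gives
\[
|\widehat\tau-\tau|\le\epsilon_{\max}\,\tau,
\]
which is equivalent to
\[
(1-\epsilon_{\max})\,\tau\le\widehat\tau\le(1+\epsilon_{\max})\,\tau .
\]

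\emph{Step 2: invert each side.} From the right inequality, dividing by the positive number \(1+\epsilon_{\max}\) gives \(\tau\ge\widehat\tau/(1+\epsilon_{\max})\). From the left inequality, the assumption \(\epsilon_{\max}<1\) makes \(1-\epsilon_{\max}>0\). Dividing by it therefore preserves the inequality and gives \(\tau\le\widehat\tau/(1-\epsilon_{\max})\).

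\emph{Step 3: substitute back.} Replace \(\tau\) by \(\operatorname{sr}(G^{(l)})/s_L\) and multiply through by \(s_L>0\). This yields exactly the stated bounds
\[
\frac{\widehat\tau}{1+\epsilon_{\max}}\,\operatorname{sr}\!\left(G^{(L)}\right)\le\operatorname{sr}\!\left(G^{(l)}\right)\le\frac{\widehat\tau}{1-\epsilon_{\max}}\,\operatorname{sr}\!\left(G^{(L)}\right).
\]

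The only delicate point is the sign bookkeeping. Nonzero matrices must be used to justify \(s_L>0\) and \(\tau>0\), so that multiplying and dividing does not flip inequalities. The hypothesis \(\epsilon_{\max}<1\) is exactly what is needed to make the upper bound finite and correctly oriented. The low-rank structure of \(\widehat J_k^{L\to l}\) (rank at most \(k\)) plays no role in this proof; it matters only for how \(\widehat\tau\) is computed in practice.
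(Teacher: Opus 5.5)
Your proposal is correct and follows the paper's proof essentially step for step. Both unpack the relative-error bound into \((1-\epsilon_{\max})\tau^{L\to l}\le\widehat\tau_k^{L\to l}\le(1+\epsilon_{\max})\tau^{L\to l}\), rearrange using \(\epsilon_{\max}<1\), and multiply by \(\operatorname{sr}(G^{(L)})\); your explicit positivity bookkeeping is a harmless addition the paper leaves implicit.
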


\begin{proof}
By the definition of the relative transfer-factor fit error and assuming it is bounded by $\epsilon_{max}$,
\[
\left|
\widehat{\tau}_k^{L\to l}
-
\tau^{L\to l}
\right|
\leq
\epsilon_{max}\tau^{L\to l}.
\]
Hence,
\[
\left(1-\epsilon_{max}\right)\tau^{L\to l}
\leq
\widehat{\tau}_k^{L\to l}
\leq
\left(1+\epsilon_{max}\right)\tau^{L\to l}.
\]
Since \(\epsilon_{max}<1\), rearranging gives
\[
\frac{
    \widehat{\tau}_k^{L\to l}
}{
    1+\epsilon_{max}
}
\leq
\tau^{L\to l}
\leq
\frac{
    \widehat{\tau}_k^{L\to l}
}{
    1-\epsilon_{max}
}.
\]
Finally, using
\[
\operatorname{sr}\left(G^{(l)}\right)
=
\tau^{L\to l}
\operatorname{sr}\left(G^{(L)}\right)
\]
proves the result.
\end{proof}

\section{Matrix metrics}
\label{app:matrix_metrics}

This appendix defines the matrix metrics used in the empirical sections.
All matrices below are row-stacked matrices in \(\RR^{n\times d}\), such as benign-centered refusal residuals \(\Delta H\) or gradient-induced activation update matrices \(\mathcal G^{(l)}\).

\paragraph{Thin SVD and right-singular subspaces.}
For \(M\in\RR^{n\times d}\), let
\[
M=U_M\Sigma_MV_M^\top
\]
denote its thin SVD, where \(V_M\in\RR^{d\times r_M}\) contains the right singular vectors associated with nonzero singular values and \(r_M:=\rank(M)\).
We write
\[
\mathcal V(M):=\operatorname{span}(V_M)
\]

\paragraph{Uncentered single-matrix metrics.}
For metrics of a single matrix, we use the uncentered matrix \(M\).
The empirical second-moment matrix is
\[
C_M:=\frac{1}{n}M^\top M,
\qquad
\lambda_{M,i}:=\frac{\sigma_{M,i}^2}{n}.
\]
The stable rank is
\[
\sr(M)
:=
\frac{\|M\|_F^2}{\|M\|_2^2}
=
\frac{\sum_{i=1}^{r_M}\sigma_{M,i}^2}{\sigma_{M,1}^2}.
\]

\paragraph{Centered pairwise comparisons between right singular subspaces.}
When comparing right singular subspaces between matrices, we use centered matrices, which means that for each matrix $M$, we remove its row mean $\mu_M$ to obtain its centered matrix $M_c$ before computing its right singular vectors:
\[
\mu_M = \frac{1}{n}M^\top\mathbf 1_n,
\qquad
M_c = M-\mathbf 1_n\mu_M^\top.
\]
Unless otherwise stated, \(V_{M,k}\) and \(\lambda_{M,i}\) in pairwise metrics are computed from the centered matrix \(M_c\).
This separates the mean direction, measured by cosine similarity, from the remaining PCA-style subspace structure.

\paragraph{Top-\(k\) policy.}
For two matrices \(A,B\), let \(r_A:=\rank(A_c)\) and \(r_B:=\rank(B_c)\).
Given a requested \(k_{\mathrm{top}}\), we use
\[
k:=\min\{k_{\mathrm{top}},r_A,r_B\}.
\]
In the main experiments we set \(k_{\mathrm{top}}=100\).
Since \(n < 100\) for our experiments, this includes all nonzero centered principal directions after the rank cap.

\paragraph{Cosine similarity of row means.}
For \(A,B\in\RR^{n\times d}\), define
\[
\operatorname{cosine\_mean}(A,B)
:=
\frac{\mu_A^\top\mu_B}{\|\mu_A\|_2\|\mu_B\|_2}
\in[-1,1].
\]
This metric compares only the mean directions and is complementary to centered subspace-overlap metrics.

\paragraph{Explained-variance overlap.}
Let \(A_c=U_A\Sigma_AV_A^\top\) and \(B_c=U_B\Sigma_BV_B^\top\) be centered thin SVDs.
Let \(V_{A,k}\) and \(V_{B,k}\) denote the first \(k\) right singular vectors, and define
\[
C^{(k)}_{A,B}:=V_{A,k}^\top V_{B,k}.
\]
The explained-variance overlap of \(A\)'s top-\(k\) directions captured by \(B\)'s top-\(k\) subspace is
\[
\operatorname{EV}_k(A\mid B)
:=
\sum_{i=1}^k
\frac{\lambda_{A,i}}{\sum_{\ell=1}^k\lambda_{A,\ell}}
\sum_{j=1}^k
\left(C^{(k)}_{A,B}\right)_{ij}^2
\in[0,1].
\]
Equivalently, each \(A\)-principal direction is weighted by its share of \(A\)'s top-\(k\) variance and scored by the squared norm of its projection onto \(\operatorname{span}(V_{B,k})\).
The metric is directional because it weights only by \(A\)'s spectrum.

\paragraph{Average principal-angle overlap.}
Let
\[
A_c = U_A\Sigma_A V_A^\top,
\qquad
B_c = U_B\Sigma_B V_B^\top
\]
be centered thin SVDs. Let \(V_{A,k}\) and \(V_{B,k}\) contain the
first \(k\) right singular vectors, and define
\[
C_{A,B}^{(k)}
:=
V_{A,k}^\top V_{B,k}.
\]
Let \(s_i(C_{A,B}^{(k)})\) denote the singular values of
\(C_{A,B}^{(k)}\). We define the average principal-angle overlap as
\begin{align}
\operatorname{AvgOverlap}_k(A,B)
&:=
\frac{1}{k}
\sum_{i=1}^{k}
s_i\!\left(C_{A,B}^{(k)}\right)^2
\\
&=
\frac{1}{k}
\left\|
V_{A,k}^\top V_{B,k}
\right\|_F^2
\in [0,1].
\end{align}
The singular values of \(C_{A,B}^{(k)}\) are the cosines of the
principal angles between the two subspaces. Thus,
\(\operatorname{AvgOverlap}_k\) is symmetric and basis-invariant,
equals \(1\) when the top-\(k\) subspaces coincide, and equals \(0\)
when they are orthogonal.

\paragraph{Shared-map metrics.}
All shared-map metrics below are computed on uncentered gradient
matrices. Let \(G^{(L)},G^{(l)}\in\mathbb{R}^{N\times d}\) be the
source and target gradient matrices, and let
\[
\widehat G_k^{(l)}
=
G^{(L)}\widehat J_k^{L\to l}
\]
be the output of the fitted top-\(k\) source-subspace shared map.

\paragraph{Source-subspace energy capture.}
Let \(V_k^{(L)}\in\mathbb{R}^{d\times k}\) contain the top-\(k\)
right singular vectors of \(G^{(L)}\). We define the fraction of
source-gradient energy contained in the selected subspace as
\begin{align}
\operatorname{EC}_k^{(L)}
:=
\frac{
    \left\|G^{(L)}V_k^{(L)}\right\|_F^2
}{
    \left\|G^{(L)}\right\|_F^2
}.
\end{align}
Equivalently, if
\(\sigma_1(G^{(L)})\geq\sigma_2(G^{(L)})\geq\cdots\) are the
singular values of \(G^{(L)}\), then
\begin{align}
\operatorname{EC}_k^{(L)}
=
\frac{
    \sum_{j=1}^{k}\sigma_j\!\left(G^{(L)}\right)^2
}{
    \sum_j\sigma_j\!\left(G^{(L)}\right)^2
}.
\end{align}
This metric measures how much of the total squared source-gradient
energy is retained before fitting the shared map.

\paragraph{Relative Frobenius reconstruction error.}
We define the relative Frobenius error between the fitted and observed
target matrices as
\begin{align}
\operatorname{RelF}_k^{L\to l}
:=
\frac{
    \left\|
    G^{(l)}-\widehat G_k^{(l)}
    \right\|_F
}{
    \left\|G^{(l)}\right\|_F
}.
\end{align}
This metric measures the total reconstruction error relative to the
Frobenius norm of the observed target matrix. A value of zero denotes
exact reconstruction; the metric is not upper bounded by one.

\paragraph{Relative spectral reconstruction error.}
We define the relative spectral error as
\begin{align}
\operatorname{Rel2}_k^{L\to l}
:=
\frac{
    \left\|
    G^{(l)}-\widehat G_k^{(l)}
    \right\|_2
}{
    \left\|G^{(l)}\right\|_2
}.
\end{align}
This metric measures the largest residual singular direction relative
to the dominant singular direction of the observed target matrix. It
complements the relative Frobenius error because stable rank depends
on both the Frobenius and spectral norms.

\clearpage

\subsection{Refusal direction across released post-training checkpoints - additional plots}

\begin{figure}[h]
    \centering
    \begin{subfigure}[t]{0.45\linewidth}
        \centering
        \includegraphics[width=\linewidth]{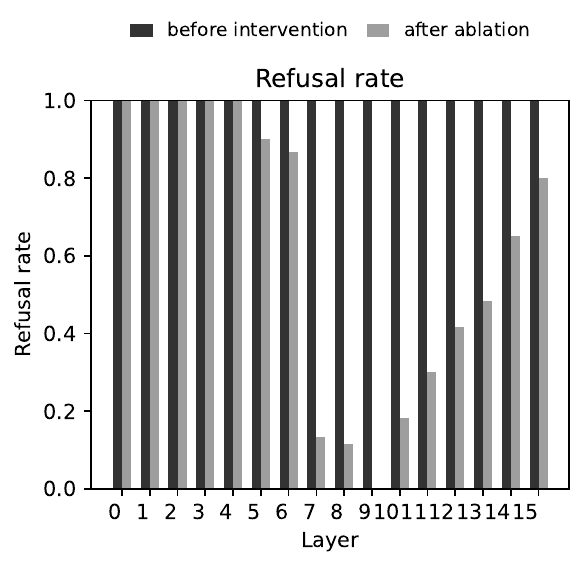}
        \label{fig:_profiler_to_upload2_additional_layer:b}
    \end{subfigure}
    \begin{subfigure}[t]{0.48\linewidth}
        \centering
        \includegraphics[width=\linewidth]{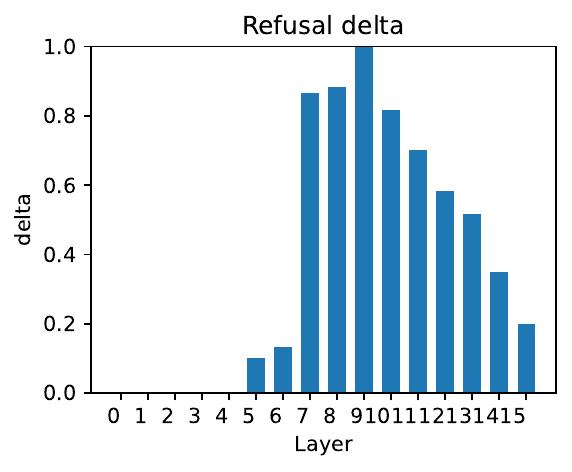}
        \label{fig:_profiler_to_upload2_additional_layer:bb}
    \end{subfigure}
    \caption{Refusal ablation attack effectiveness per each layer used to estimate the refusal difference-in-means direction (for a fixed model OLMo-2-0425-1B-Instruct against a subset of AdvBench prompts).
    The plot shows that fixing layers to 7-10 provides the largest ablation attack success (refusal delta) with decreasing success across later layers, while earlier layers provide almost no benefit, resulting in relatively ineffective attacks.}
    \label{fig:_profiler_to_upload2_additional_layer}
\end{figure}

\begin{figure}[h]
    \begin{subfigure}[b]{0.33\linewidth}
        \centering
        \includegraphics[width=\linewidth]{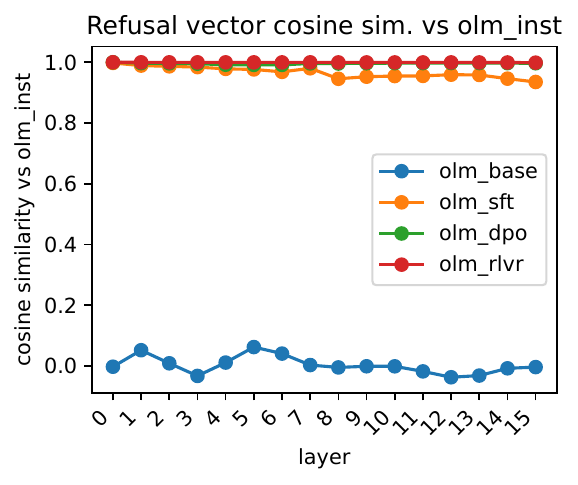}
        \label{fig:_profiler_to_upload2____refusal_decomp___layer15__pairwise:c}
    \end{subfigure}
    \begin{subfigure}[b]{0.33\linewidth}
        \centering
        \includegraphics[width=\linewidth]{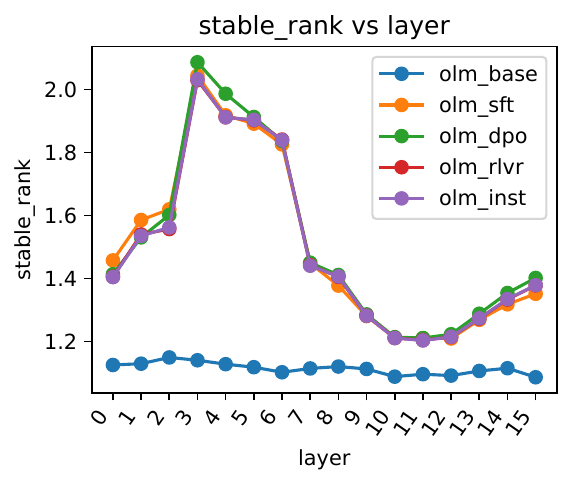}
        \label{fig:_profiler_to_upload2____refusal_decomp___layer15__pairwise:d}
    \end{subfigure}
    \begin{subfigure}[b]{0.33\linewidth}
        \centering
        \includegraphics[width=\linewidth]{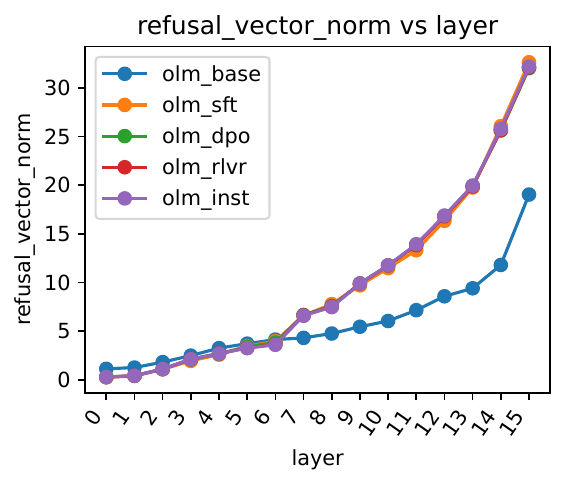}
        \label{fig:_profiler_to_upload2____refusal_decomp___layer15__pairwise:a}
    \end{subfigure}
    \caption{Measuring refusal vectors across stages shows that once a functional refusal vector appears at SFT stage, all resulting refusal directions across all stages (except the base model) are all similar for a subset of selected AdvBench prompts. Refusal vector norms grow after SFT stages and stay similar throughout post-training stages.}
    \label{fig:_profiler_to_upload2____refusal_decomp___layer15__pairwise}
\end{figure}

\clearpage

\section{Refusal first-token diversity increases gradient stable ranks - additional plots}

\begin{figure}[h]
    \centering
    \begin{subfigure}[t]{0.31\linewidth}
        \centering
        \includegraphics[width=\linewidth]{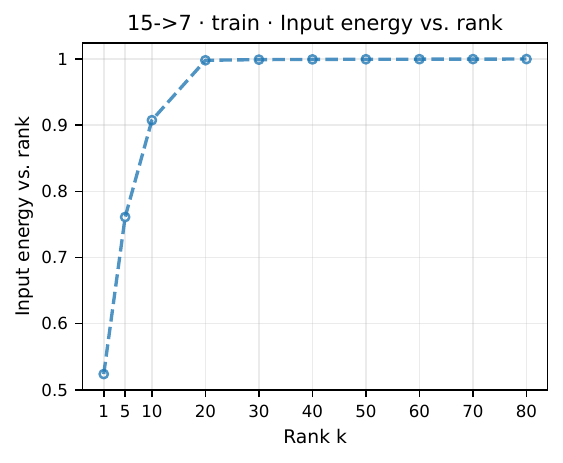}
        \caption{Source-energy capture.}
        \label{fig:4_4_shared_map:energy}
    \end{subfigure}
    \begin{subfigure}[t]{0.31\linewidth}
        \centering
        \includegraphics[width=\linewidth]{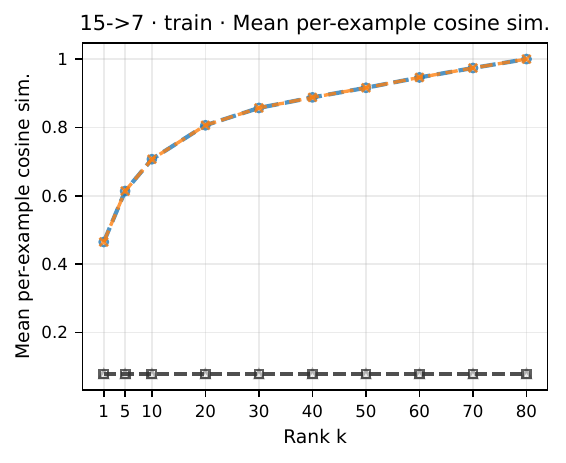}
        \caption{Row-wise cosine similarity.}
        \label{fig:4_4_shared_map:cosine}
    \end{subfigure}
    \begin{subfigure}[t]{0.31\linewidth}
        \centering
        \includegraphics[width=\linewidth]{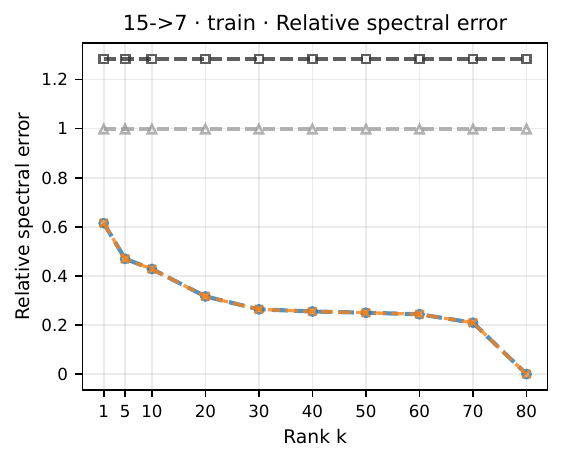}
        \caption{Relative spectral error.}
        \label{fig:4_4_shared_map:rel2}
    \end{subfigure}

    \begin{subfigure}[t]{0.40\linewidth}
        \centering
        \includegraphics[width=\linewidth]{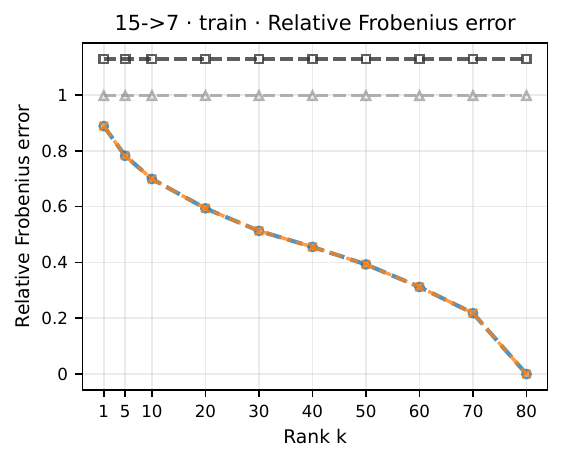}
        \caption{Relative Frobenius error.}
        \label{fig:4_4_shared_map:relf}
    \end{subfigure}
    \begin{subfigure}[t]{0.40\linewidth}
        \centering
        \includegraphics[width=\linewidth]{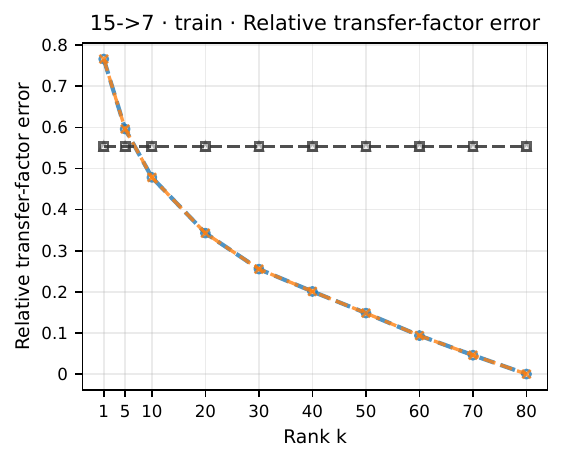}
        \caption{Transfer-factor fit error.}
        \label{fig:4_4_shared_map:tau}
    \end{subfigure}

    \caption{
    In-sample fit quality of the top-\(k\) source-subspace shared map from \(G^{(15)}\) to \(G^{(7)}\) for the \(m=16\) refusal-start condition.
    Increasing \(k\) captures more source-gradient energy, improves per-example directional agreement, and reduces matrix-reconstruction and stable-rank-transfer errors.
    At \(k=20\), the map captures \(99.8\%\) of source energy and attains relative Frobenius, spectral, and transfer-factor errors \(0.594\), \(0.316\), and \(0.343\).
    The \(k=80\) endpoint is the in-sample interpolation limit and is shown only as a reference, not as evidence of a low-rank mechanism.
    }
    \label{fig:4_4_shared_map}
\end{figure}

\begin{figure}[h]
    \centering
    \begin{subfigure}[t]{0.31\linewidth}
        \centering
        \includegraphics[width=\linewidth]{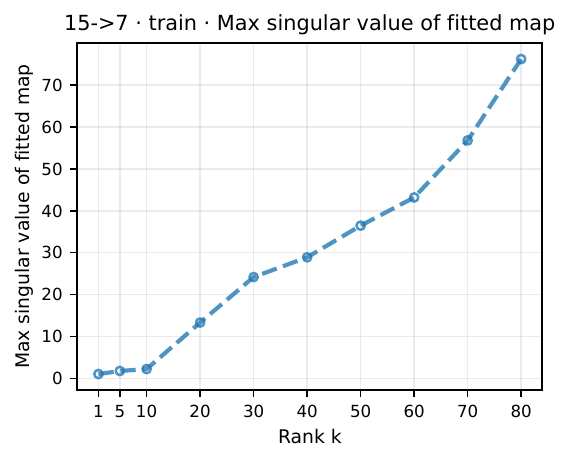}
        \caption{Largest singular value.}
        \label{fig:4_4_shared_map_spectrum:smax}
    \end{subfigure}
    \begin{subfigure}[t]{0.31\linewidth}
        \centering
        \includegraphics[width=\linewidth]{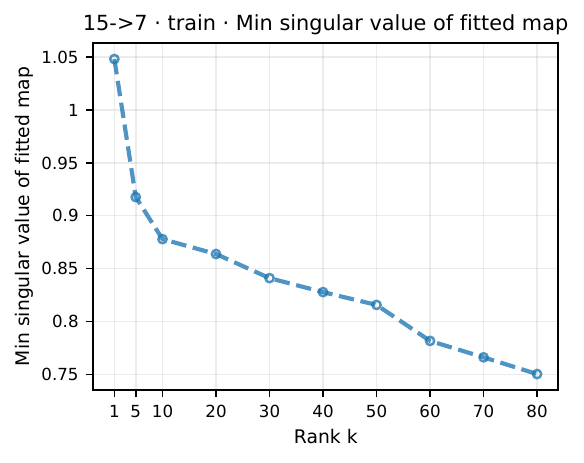}
        \caption{Smallest nonzero singular value.}
        \label{fig:4_4_shared_map_spectrum:smin}
    \end{subfigure}
    \begin{subfigure}[t]{0.31\linewidth}
        \centering
        \includegraphics[width=\linewidth]{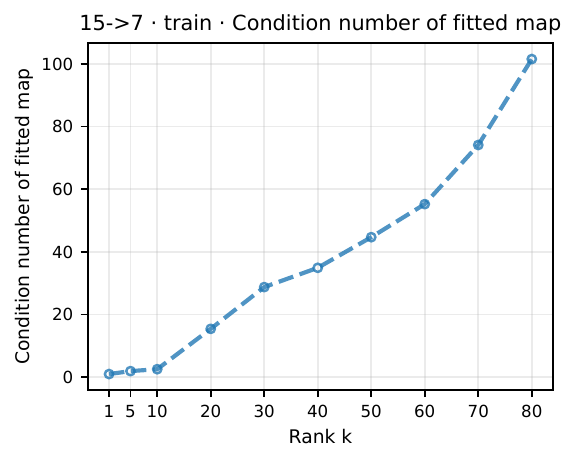}
        \caption{Reduced-map condition number.}
        \label{fig:4_4_shared_map_spectrum:kappa}
    \end{subfigure}

    \caption{
    Singular-value diagnostics for the fitted map on its selected source coordinates.
    The smallest nonzero singular value changes little between \(k=10\) and \(k=20\), from \(0.878\) to \(0.864\), while the largest singular value grows from \(2.24\) to \(13.33\).
    Consequently, the reduced-map condition number rises from \(2.55\) to \(15.43\).
    This explains why the higher-rank map gives a better empirical fit while its worst-case condition-number bound becomes less informative.
    }
    \label{fig:4_4_shared_map_spectrum}
\end{figure}

\clearpage

\section{Refusal first-token diversity increases residual rank and weakens single-vector ablation - additional plots}

\begin{figure}[h]
    \centering
    \begin{subfigure}[b]{0.4\linewidth}
        \centering
        \includegraphics[width=\linewidth]{post_rebuttal_plots/4.5_batch1_only_refharm/____stable_rank_vs_x__layerwise_mean.pdf}
        \label{fig:____stable_rank_vs_x__layerwise_overlay_additional:a}
    \end{subfigure}
    \caption{Averaged ranks over layers 7-15 stable rank of benign-centered refusal residuals \(\Delta H\). Increasing refusal first-token diversity raises \(\operatorname{sr}(\Delta H)\) in middle-to-late layers.}
    \label{fig:____stable_rank_vs_x__layerwise_overlay_additional}
\end{figure}

\begin{figure}[h]
    \centering
    \begin{subfigure}[t]{0.37\linewidth}
        \centering
        \includegraphics[width=\linewidth]{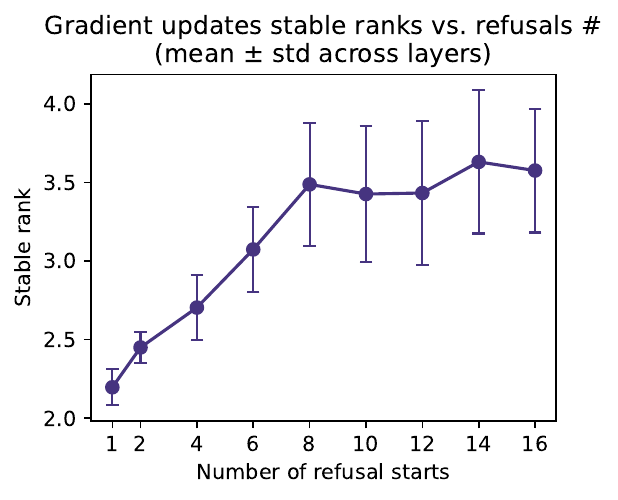}
        \label{fig:____stable_rank_grad_vs_x__layerwise_overlay:aa}
    \end{subfigure}
    \begin{subfigure}[t]{0.42\linewidth}
        \centering
        \includegraphics[width=\linewidth]{post_rebuttal_plots/4.5_batch1_only_refharm/____stable_rank_grad_ds_vs_x__layerwise_overlay.pdf}
        \label{fig:____stable_rank_grad_vs_x__layerwise_overlay:a}
    \end{subfigure}
    \begin{subfigure}[t]{0.45\linewidth}
        \centering
        \includegraphics[width=\linewidth]{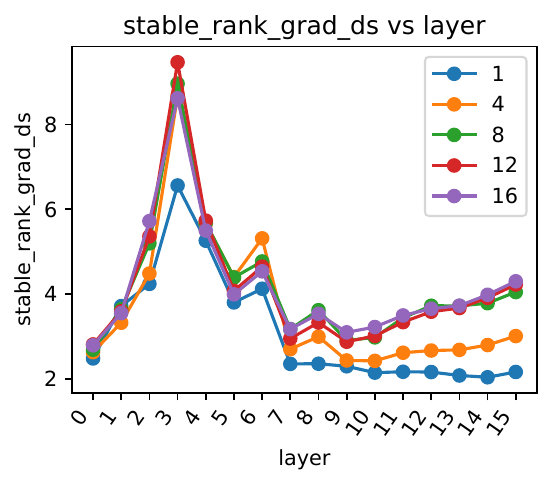}
        \label{fig:____stable_rank_grad_vs_x__layerwise_overlay:b}
    \end{subfigure}
    \caption{Stable rank of gradient-induced update deltas \(\mathcal G\): more refusal first-token buckets produce higher effective rank across most layers, although the increase saturates at the end. Left upper plot averages stable ranks across layers 7-15.}
    \label{fig:____stable_rank_grad_vs_x__layerwise_overlay}
\end{figure}

\newpage

\subsection{Frozen-model raw-gradient shared-map diagnostics}
\label{sec:4_5_shared_map_gradients}
\begin{figure}[h]
    \centering
    \begin{subfigure}[t]{0.31\linewidth}
        \centering
        \includegraphics[width=\linewidth]{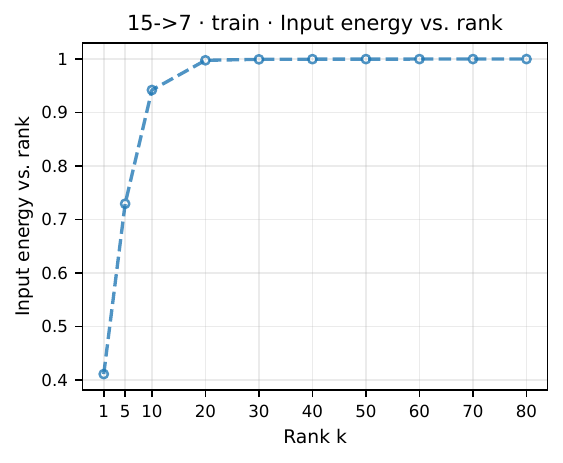}
        \caption{Source-energy capture.}
        \label{fig:4_5_shared_map_gradients:energy}
    \end{subfigure}
    \begin{subfigure}[t]{0.31\linewidth}
        \centering
        \includegraphics[width=\linewidth]{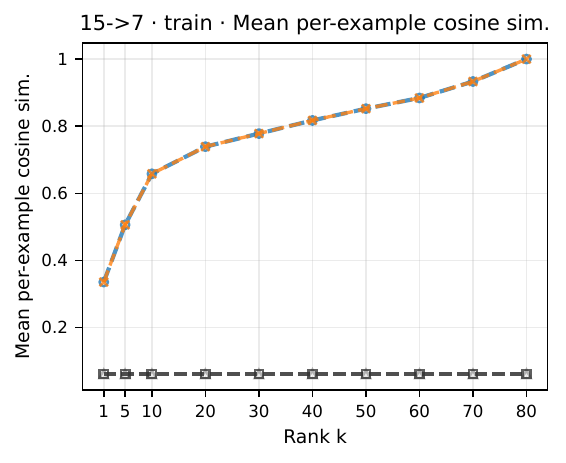}
        \caption{Row-wise cosine similarity.}
        \label{fig:4_5_shared_map_gradients:cosine}
    \end{subfigure}
    \begin{subfigure}[t]{0.31\linewidth}
        \centering
        \includegraphics[width=\linewidth]{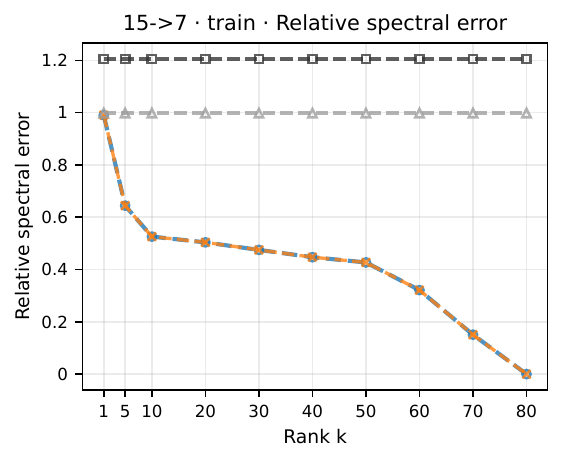}
        \caption{Relative spectral error.}
        \label{fig:4_5_shared_map_gradients:rel2}
    \end{subfigure}

    \begin{subfigure}[t]{0.33\linewidth}
        \centering
        \includegraphics[width=\linewidth]{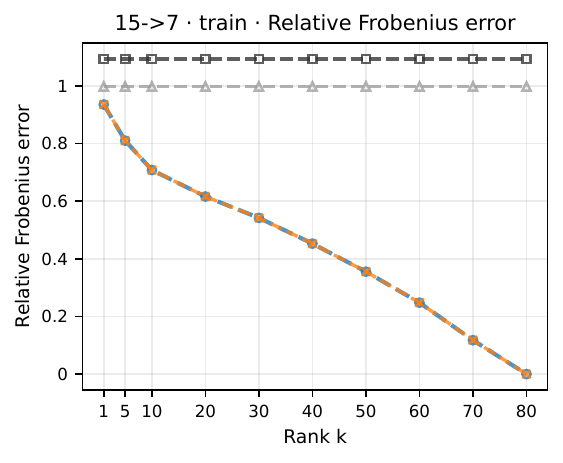}
        \caption{Relative Frobenius error.}
        \label{fig:4_5_shared_map_gradients:relf}
    \end{subfigure}
    \begin{subfigure}[t]{0.33\linewidth}
        \centering
        \includegraphics[width=\linewidth]{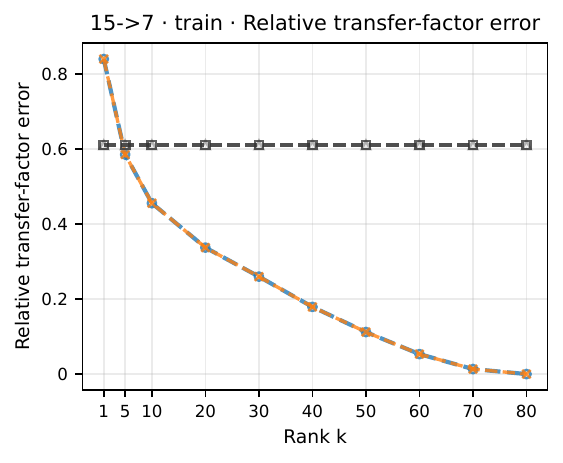}
        \caption{Transfer-factor fit error.}
        \label{fig:4_5_shared_map_gradients:tau}
    \end{subfigure}

    \caption{
    In-sample fit quality of the top-\(k\) source-subspace shared map from \(G^{(15)}\) to \(G^{(7)}\) for the \(m=16\) prompt-subset condition.
    Increasing \(k\) captures more source-gradient energy, improves per-example directional agreement, and reduces matrix-reconstruction and stable-rank-transfer errors.
    At \(k=20\), the selected source subspace captures \(99.8\%\) of source-gradient energy.
    The relative Frobenius and spectral errors are \(0.616\) and \(0.504\), the mean row-wise cosine similarity is \(0.739\), and the relative transfer-factor fit error is \(0.337\).
    Thus, the rank-20 source subspace contains nearly all source-gradient energy, while the remaining reconstruction errors show that the shared map captures a meaningful but incomplete component of the observed cross-layer relation.
    The \(k=80\) endpoint is the in-sample interpolation limit and is shown only as a reference, not as evidence of a low-rank mechanism.
    }
    \label{fig:4_5_shared_map_gradients}
\end{figure}
\begin{figure}[h]
    \centering
    \begin{subfigure}[t]{0.31\linewidth}
        \centering
        \includegraphics[width=\linewidth]{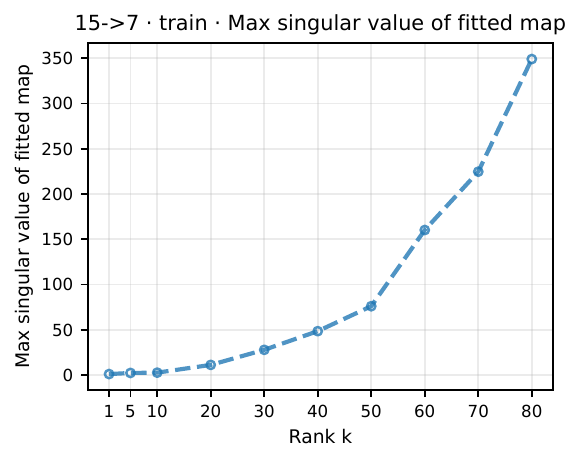}
        \caption{Largest singular value.}
        \label{fig:4_5_shared_map_gradients_spectrum:smax}
    \end{subfigure}
    \begin{subfigure}[t]{0.31\linewidth}
        \centering
        \includegraphics[width=\linewidth]{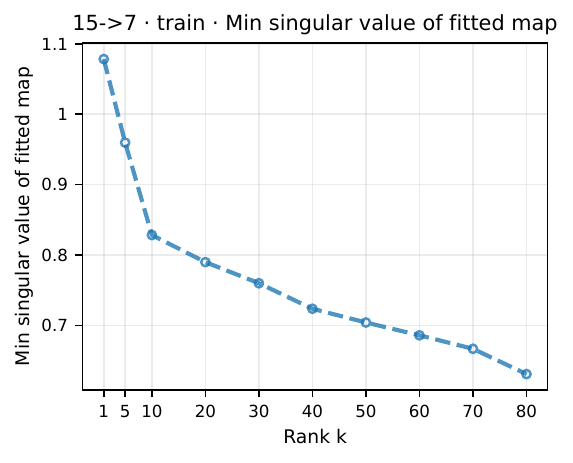}
        \caption{Smallest nonzero singular val.}
        \label{fig:4_5_shared_map_gradients_spectrum:smin}
    \end{subfigure}
    \begin{subfigure}[t]{0.31\linewidth}
        \centering
        \includegraphics[width=\linewidth]{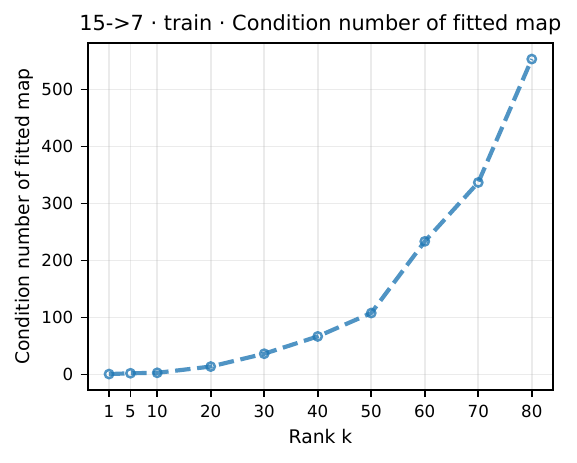}
        \caption{Reduced-map condition num.}
        \label{fig:4_5_shared_map_gradients_spectrum:kappa}
    \end{subfigure}

    \caption{
    Singular-value diagnostics for the reduced coordinate map fitted to the raw gradients.
    Between \(k=10\) and \(k=20\), the smallest nonzero singular value decreases slightly from \(0.83\) to \(0.79\), the largest singular value grows from \(2.65\) to \(11.19\), the reduced-map condition number rises from \(3.19\) to \(14.17\).
    The higher-rank map gives a better in-sample fit but becomes substantially more anisotropic, making its worst-case condition-number bound less informative.
    }
    \label{fig:4_5_shared_map_gradients_spectrum}
\end{figure}

\clearpage

\subsection{Frozen-model gradient-update shared-map diagnostics}
\label{sec:4_5_shared_map_updates}

\begin{figure}[h]
    \centering
    \begin{subfigure}[t]{0.31\linewidth}
        \centering
        \includegraphics[width=\linewidth]{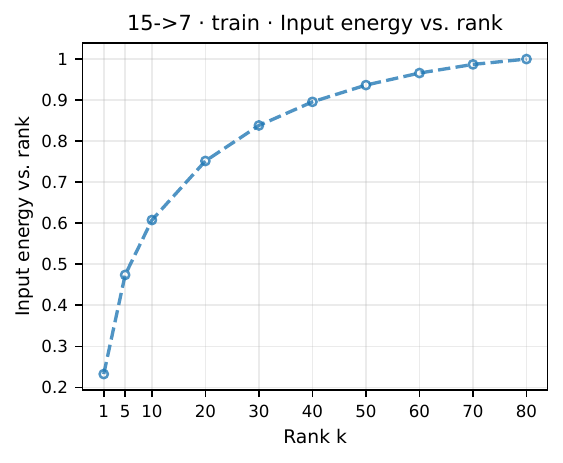}
        \caption{Source-energy capture.}
        \label{fig:4_5_shared_map_updates:energy}
    \end{subfigure}
    \begin{subfigure}[t]{0.31\linewidth}
        \centering
        \includegraphics[width=\linewidth]{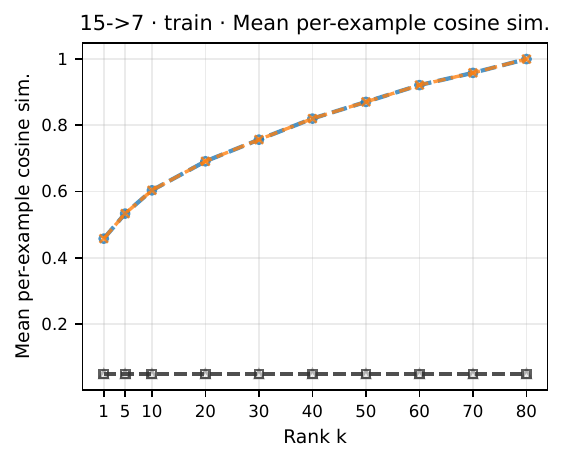}
        \caption{Row-wise cosine similarity.}
        \label{fig:4_5_shared_map_updates:cosine}
    \end{subfigure}
    \begin{subfigure}[t]{0.31\linewidth}
        \centering
        \includegraphics[width=\linewidth]{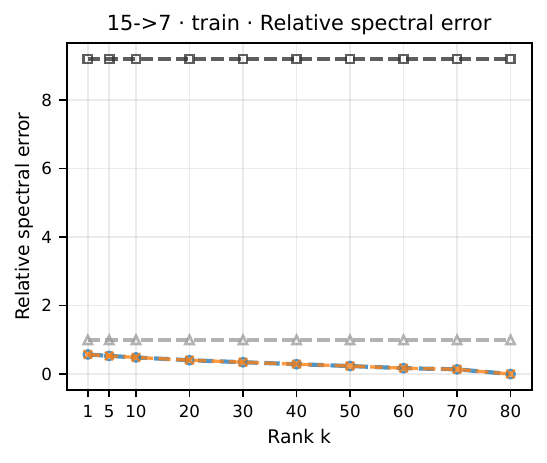}
        \caption{Relative spectral error.}
        \label{fig:4_5_shared_map_updates:rel2}
    \end{subfigure}

    \begin{subfigure}[t]{0.33\linewidth}
        \centering
        \includegraphics[width=\linewidth]{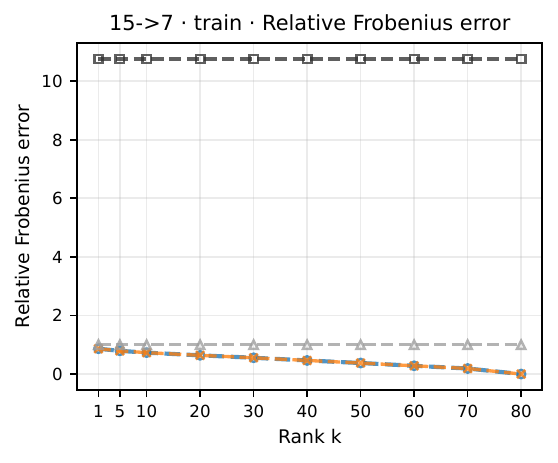}
        \caption{Relative Frobenius error.}
        \label{fig:4_5_shared_map_updates:relf}
    \end{subfigure}
    \begin{subfigure}[t]{0.33\linewidth}
        \centering
        \includegraphics[width=\linewidth]{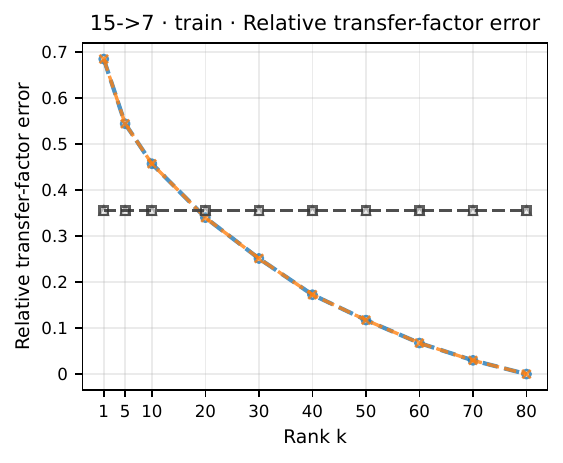}
        \caption{Transfer-factor fit error.}
        \label{fig:4_5_shared_map_updates:tau}
    \end{subfigure}

    \caption{
    In-sample fit quality of the top-\(k\) source-subspace shared map from \(\mathcal G^{(15)}\) to \(\mathcal G^{(7)}\) for the \(m=16\) prompt-subset condition.
    Increasing \(k\) captures more source-update energy, improves per-example directional agreement, and reduces matrix-reconstruction and stable-rank-transfer errors.
    At \(k=20\), the selected source subspace captures \(75.2\%\) of source-update energy.
    The relative Frobenius and spectral errors are \(0.639\) and \(0.405\), the mean row-wise cosine similarity is \(0.691\), and the relative transfer-factor fit error is \(0.340\).
    The more gradual energy-capture curve indicates that substantial source-update energy lies beyond the first 20 principal directions.
    The \(k=80\) endpoint is the in-sample interpolation limit and is shown only as a reference, not as evidence of a low-rank mechanism.
    }
    \label{fig:4_5_shared_map_updates}
\end{figure}

\begin{figure}[h]
    \centering
    \begin{subfigure}[t]{0.31\linewidth}
        \centering
        \includegraphics[width=\linewidth]{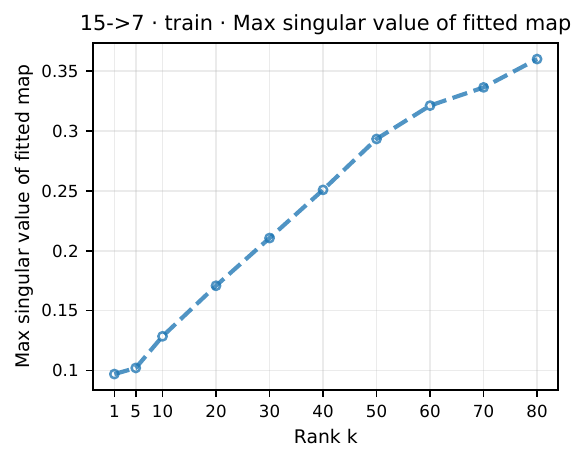}
        \caption{Largest singular value.}
        \label{fig:4_5_shared_map_updates_spectrum:smax}
    \end{subfigure}
    \begin{subfigure}[t]{0.31\linewidth}
        \centering
        \includegraphics[width=\linewidth]{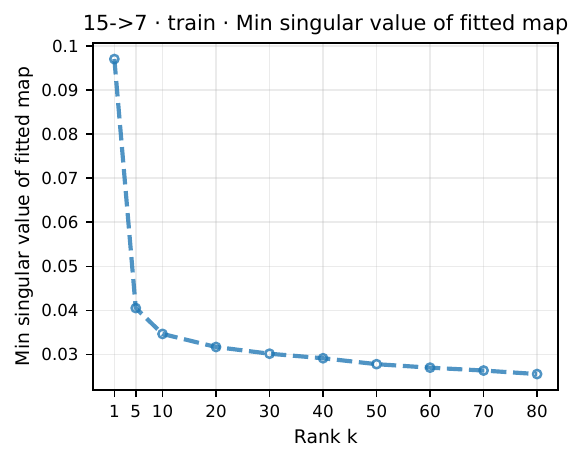}
        \caption{Smallest nonzero singular value.}
        \label{fig:4_5_shared_map_updates_spectrum:smin}
    \end{subfigure}
    \begin{subfigure}[t]{0.31\linewidth}
        \centering
        \includegraphics[width=\linewidth]{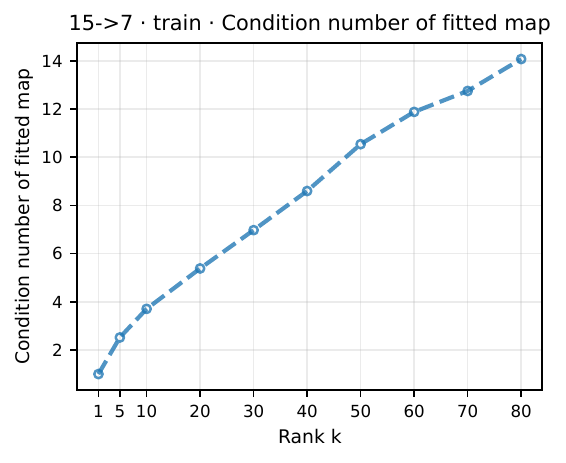}
        \caption{Reduced-map condition number.}
        \label{fig:4_5_shared_map_updates_spectrum:kappa}
    \end{subfigure}

    \caption{
    Singular-value diagnostics for the reduced coordinate map fitted to the gradient-induced activation updates.
    Between \(k=10\) and \(k=20\), the smallest nonzero singular value decreases from \(0.0347\) to \(0.0317\), while the largest singular value grows from \(0.129\) to \(0.171\).
    Consequently, the reduced-map condition number rises from \(3.71\) to \(5.38\).
    The rank-20 map improves the in-sample fit with substantially less conditioning deterioration than the corresponding raw-gradient map.
    }
    \label{fig:4_5_shared_map_updates_spectrum}
\end{figure}

\clearpage

\section{Diverse-refusal fine-tuning raises refusal residuals rank and weakens ablation - additional plots}

\begin{figure}[h]
    \centering
    \begin{minipage}{\linewidth}
        \centering

        \makebox[0.42\linewidth][c]{\textbf{8 refusal starts}}
        \hfill
        \makebox[0.42\linewidth][c]{\textbf{16 refusal starts}}

        \vspace{0.15em}

        \includegraphics[width=0.42\linewidth]{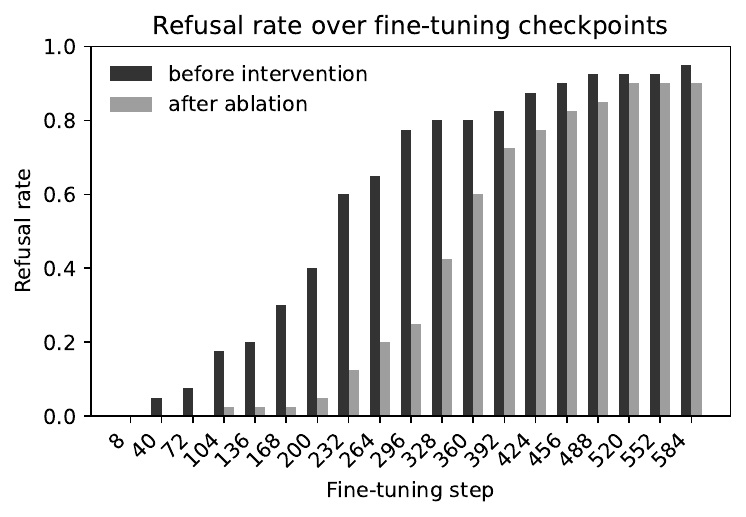}
        \hfill
        \includegraphics[width=0.42\linewidth]{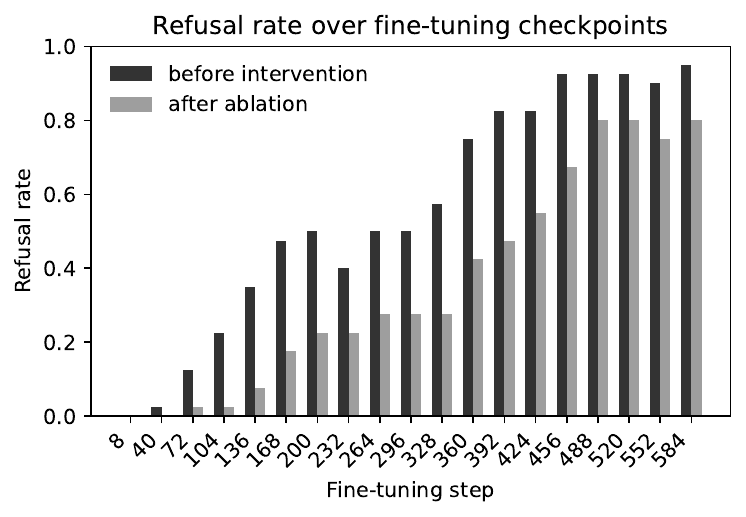}

        \par
        \makebox[0.42\linewidth][c]{\small (a) Refusal rates, 8 starts}
        \hfill
        \makebox[0.42\linewidth][c]{\small (b) Refusal rates, 16 starts}

        \vspace{0.35em}

        \includegraphics[width=0.42\linewidth]{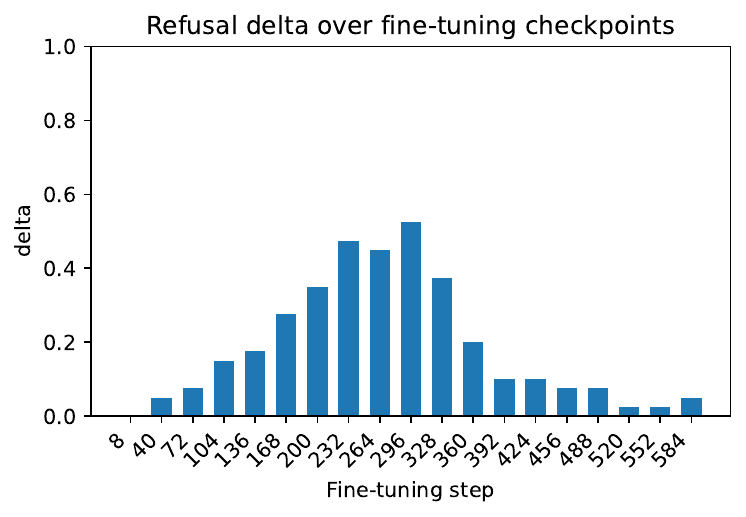}
        \hfill
        \includegraphics[width=0.42\linewidth]{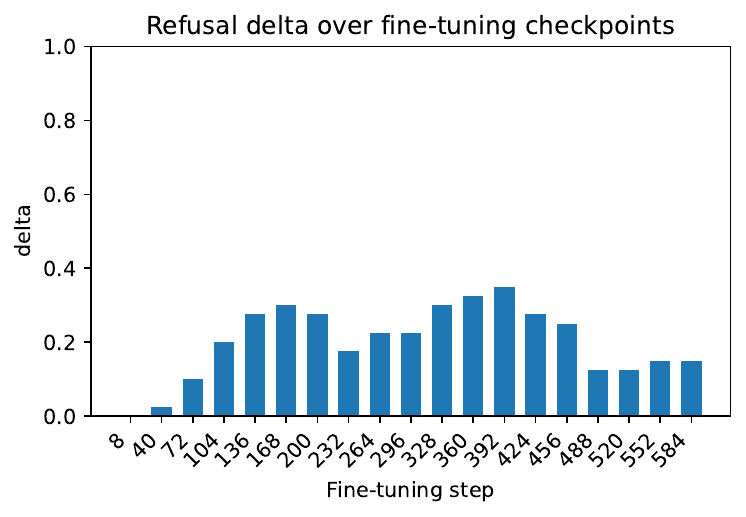}

        \par
        \makebox[0.42\linewidth][c]{\small (c) Absolute refusal delta, 8 starts}
        \hfill
        \makebox[0.42\linewidth][c]{\small (d) Absolute refusal delta, 16 starts}

        \vspace{0.35em}

        \includegraphics[width=0.42\linewidth]{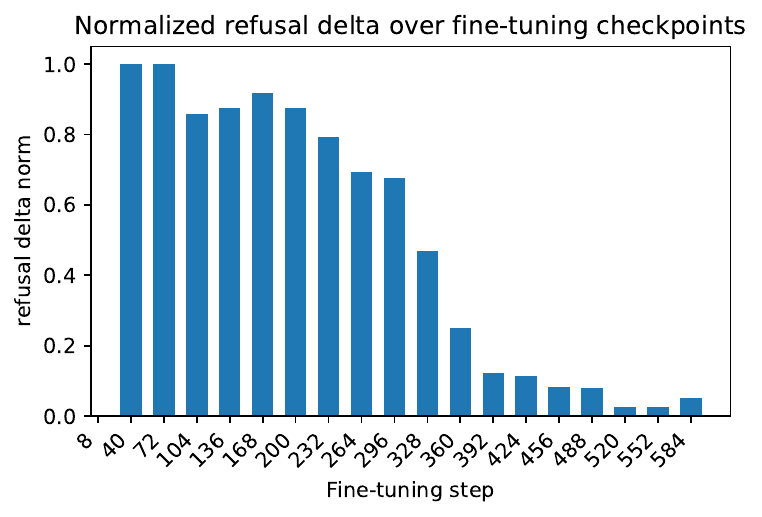}
        \hfill
        \includegraphics[width=0.42\linewidth]{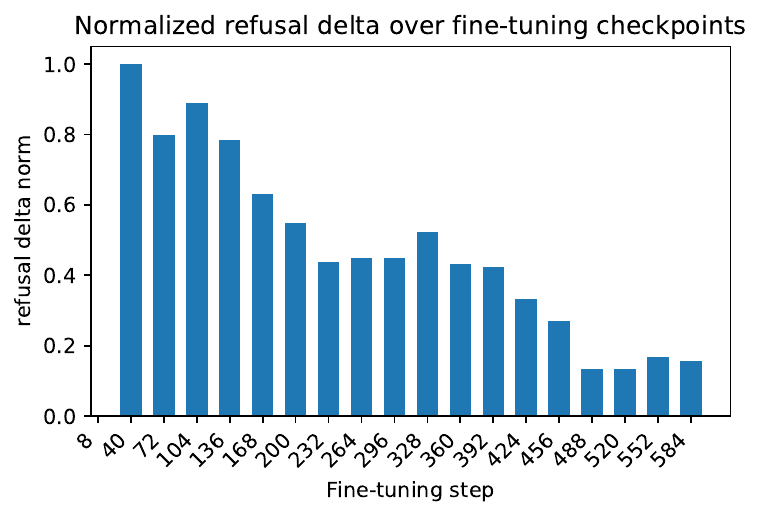}

        \par
        \makebox[0.42\linewidth][c]{\small (e) Normalized refusal delta, 8 starts}
        \hfill
        \makebox[0.42\linewidth][c]{\small (f) Normalized refusal delta, 16 starts}

        \caption{Optimization-step-matched refusal ablation trajectories for models trained with 8 and 16 refusal starts. Left and right columns show the 8- and 16-start conditions, respectively. The top row reports baseline and post-ablation refusal rates; the middle row reports the absolute refusal deltas; and the bottom row reports the fraction of baseline refusals removed by ablation.
        Ablation vulnerability exists in a small window of training since the refusal delta is peaked at first when maximum refusal rate is reached, decaying afterwards.
        The 8-start plot shows deltas having a single peak with maximum of 0.55, while 16-start plot shows two smaller peaks and reaches overall maximum of 0.375, consistent with reduced ablation attack effectiveness.
        }
        \label{fig:refusal_diversity_equal_step_trajectories}
    \end{minipage}
\end{figure}

\begin{figure}[h]
    \centering
    \begin{subfigure}[b]{0.34\linewidth}
        \centering
        \includegraphics[width=\linewidth]{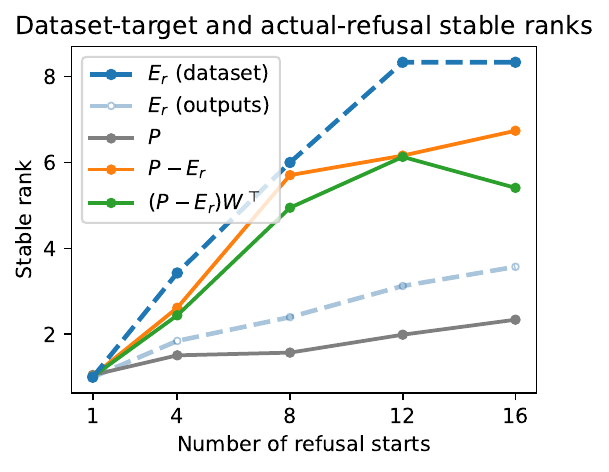}
        \label{fig:4_6_rank_propagation:head}
    \end{subfigure}
    \begin{subfigure}[b]{0.30\linewidth}
        \centering
        \includegraphics[width=\linewidth]{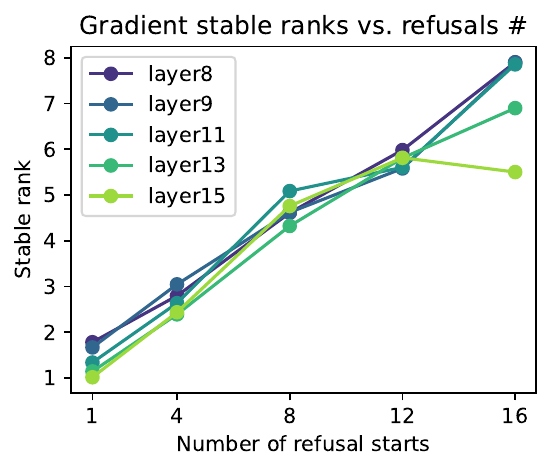}
        \label{fig:4_6_rank_propagation:layers_raw}
    \end{subfigure}
    \begin{subfigure}[b]{0.34\linewidth}
        \centering
        \includegraphics[width=\linewidth]{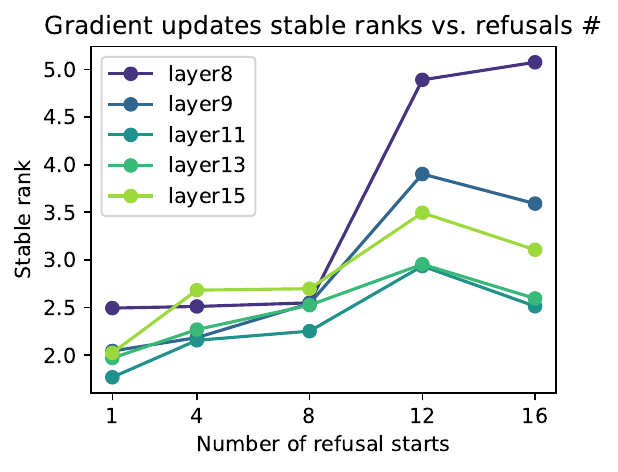}
        \label{fig:4_6_rank_propagation:layers}
    \end{subfigure}
    \begin{subfigure}[b]{0.75\linewidth}
        \centering
        \includegraphics[width=\linewidth]{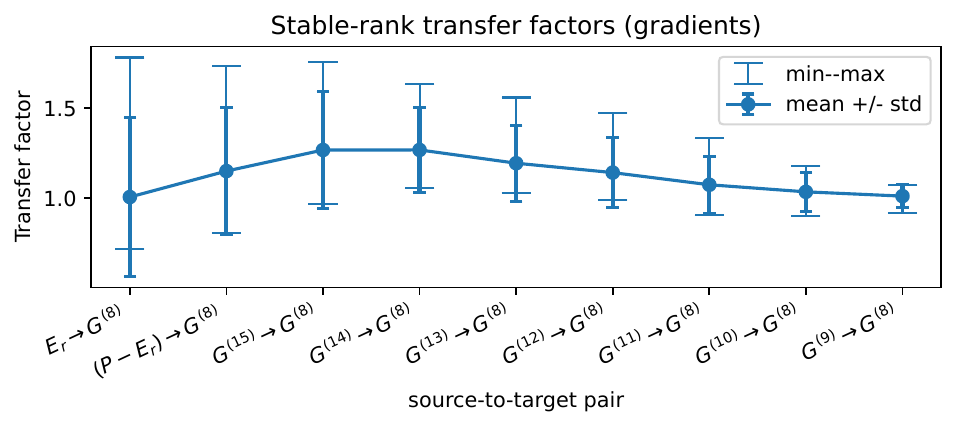}
        \label{fig:4_6_transfer_factors:transfer_raw}
    \end{subfigure}
    \begin{subfigure}[b]{0.75\linewidth}
        \centering
        \includegraphics[width=\linewidth]{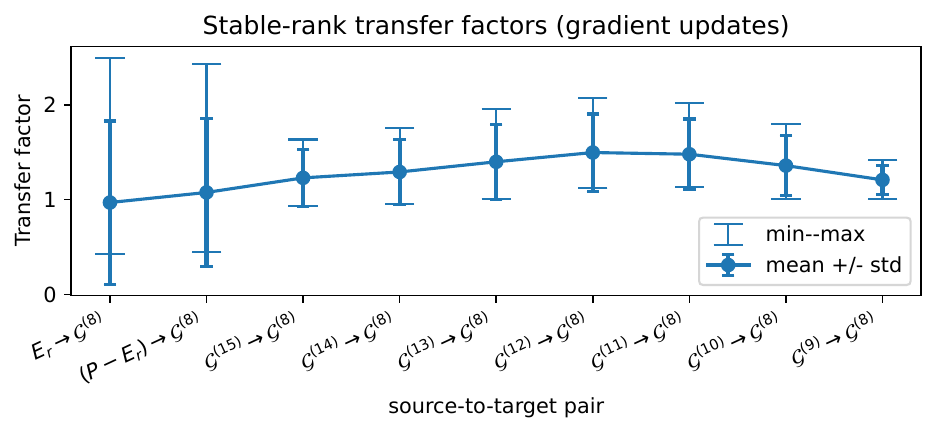}
        \label{fig:4_6_transfer_factors:transfer}
    \end{subfigure}
    \caption{
    Stable ranks and stable rank propagation under refusal-start diversification by fine-tuning for gradient and gradient-induced activation deltas.
    Top left: stable ranks of the dataset-target \(E_r\), realized-output \(E_r\), prediction matrix \(P\), target residual \(P-E_r\), and resulting last-layer gradient \((P-E_r)W^\top\).
    Top middle: stable ranks of the gradients \(G^{(l)}\).
    Top right: stable ranks of the gradient-induced activation updates \(\mathcal G^{(l)}\).
    Bottom rows: end-to-end stable rank transfer factors measured for raw gradients and gradient-induced activation deltas transfer factors.
    The target-space, gradients, and gradient-updates stable ranks increase overall as refusal-start support broadens, and the both gradient and gradient-update matrices undergo mild stable-rank expansion from layer 15 to layer 8 on average.
    }
    \label{fig:4_6_transfer_factors}
\end{figure}

\end{document}